\pdfoutput=1
\documentclass[letterpaper]{article} 
\usepackage{aaai2027}  
\usepackage[hyphens]{url}  
\usepackage{graphicx} 
\usepackage{natbib}  
\usepackage{caption} 
\usepackage{pifont}
\usepackage{algorithm}
\usepackage{amsmath}
\usepackage{algpseudocode}
\usepackage{booktabs}
\usepackage[table]{xcolor}
\usepackage{array}
\usepackage{subcaption}
\usepackage{multirow}
\usepackage{makecell}
\usepackage{amsthm}
\usepackage{amssymb}
\usepackage{mathrsfs}

\newtheorem{theorem}{Theorem}

\usepackage{newfloat}
\usepackage{listings}
\DeclareCaptionStyle{ruled}{labelfont=normalfont,labelsep=colon,strut=off} 
\floatstyle{ruled}
\newfloat{listing}{tb}{lst}{}
\floatname{listing}{Listing}

\title{When Teachers Mislead: Spurious-Signal-Aware On-Policy Distillation}
\author{
    Yinuo Jiang\textsuperscript{\rm 1,\rm 2}\thanks{This work was done during an internship at ByteDance.},
    Yongjie Ye\textsuperscript{\rm 2},
    Zhou Tao\textsuperscript{\rm 2},
    Xiang Zhuang\textsuperscript{\rm 3},\\
    Qiang Zhang\textsuperscript{\rm 1},
    Huajun Chen\textsuperscript{\rm 1}\corresponding,
    Tiankai Li\textsuperscript{\rm 2}\corresponding
}
\affiliations{
    \textsuperscript{\rm 1}Zhejiang University,
    \textsuperscript{\rm 2}ByteDance,\\
    \textsuperscript{\rm 3}Shanghai Artificial Intelligence Laboratory\\[3pt]
    yinuoj@zju.edu.cn,
    huajunsir@zju.edu.cn,
    litiankai.jason@bytedance.com\\[5pt]
    \textbf{SA-OPD Code: }\url{https://github.com/jjjyinuo/SA-OPD}
}

\nocopyright
\begin{document}

\maketitle

\begin{abstract}
 On-Policy distillation (OPD) transfers teacher capabilities by supervising
student-sampled trajectories with dense token-level teacher signals. Recent
selective OPD methods improve this process by prioritizing signals that are
confident, informative, or learnable. However, the assumptions overlook a fundamental failure mode of language models: their token-level judgments can be driven by input-agnostic language priors, formatting conventions, or stereotyped reasoning templates rather than task-specific evidence. We refer to such optimization-relevant but weakly input-grounded supervision as \textbf{spurious signals} in OPD, which may produce large gradients while contributing little task-improving direction. To mitigate this issue, we propose \textbf{SA-OPD}, a \textbf{S}purious-Signal-\textbf{A}ware \textbf{O}n-\textbf{P}olicy \textbf{D}istillation framework that identifies and filters misleading token-level supervision based on input-groundedness and optimization impact. SA-OPD introduces a lightweight input-groundedness proxy estimating whether a token-level distillation signal truly depends on the input. It then filters only tokens that simultaneously exhibit low input-groundedness and extreme distillation divergence, thereby removing high-impact spurious updates and achieving fine-grained OPD optimization. Extensive experiments on
both large language model (LLM) and vision-language model (VLM) settings demonstrate that
SA-OPD consistently outperforms Vanilla OPD and competitive selective methods. These
results establish input-groundedness as a key dimension for OPD supervision
selection and offer a simple, effective strategy for mitigating spurious updates.
\end{abstract}
 \section{Introduction}

    \begin{figure*}[t]
        \centering
        \includegraphics[width=0.98\textwidth]{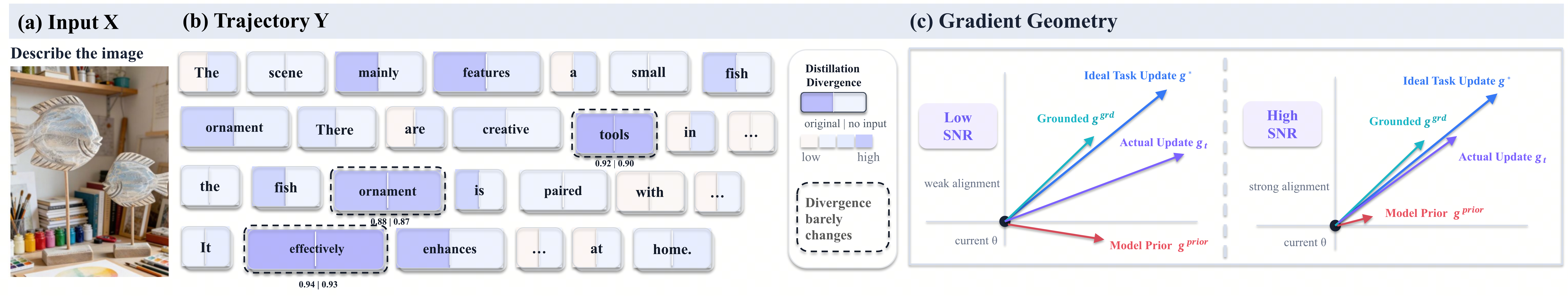}
        \caption{Illustration of spurious signals in OPD. Panels (a) and (b) present a concrete example: (a) shows the input image and question, while (b) visualizes the token-level teacher--student divergence along the corresponding student rollout. Some tokens exhibit nearly unchanged teacher--student divergence after prompt removal, suggesting that their dense supervision is driven by generic entity collocations or prior templates rather than input-specific visual evidence. Panel (c) provides the gradient interpretation: such weakly grounded signals induce low-SNR OPD updates, with large prior-driven gradient components but weak alignment with the ideal task-improving direction.}
        \label{fig:problem}
    \end{figure*}

    Knowledge distillation transfers capabilities from a strong teacher to a smaller student. Conventional distillation often relies on teacher-generated or offline trajectories, which can create a distribution mismatch between training states and the student's own evolving policy. On-policy distillation (OPD)~\citep{lu2025onpolicydistillation,gu2026minillmonpolicydistillationlarge} mitigates this mismatch by rolling out from the current student policy and applying dense token-level teacher supervision on student-induced states, improving distillation efficiency beyond sparse sequence-level rewards. Despite this progress, recent studies have shown that not every teacher signal in OPD is equally useful. OPD can be improved by selecting or reweighting supervision according to token uncertainty, teacher confidence, teacher--student divergence, or local learnability~\citep{xu2026tiptokenimportanceonpolicy,wang2026disagreementlearnabletokenteachability,jin2026entropyawareonpolicydistillationlanguage,li2026filterreweightrethinkingoptimization}. These methods reveal an important principle: OPD should not blindly imitate all teacher signals. Instead, it should prioritize signals that are informative, learnable, and likely to induce useful student updates.

    However, existing selection criteria still share a fundamental assumption: they treat the teacher judgment itself as reliable once it is confident, salient, or learnable. This assumption overlooks a basic property of teacher models: \emph{a teacher is itself a learned model whose token-level judgments are shaped not only by the current task input, but also by language priors, formatting conventions, and stereotyped reasoning patterns acquired during large-scale pretraining and post-training}. As a result, the teacher may assign a biased reward or penalty to a token not because the token is appropriate for the specific input, but because it matches or violates a generic template. We refer to such input-ungrounded yet optimization-relevant judgments as \emph{spurious signals} in OPD. These signals are especially harmful because they can produce large gradients while contributing little task-improving direction. For example, a teacher may over-reward generic reasoning phrases, answer templates, or repeated textual patterns even when they are not supported by the current image or question. Conversely, it may over-penalize a valid student reasoning path simply because it deviates from the teacher's preferred style. In both cases, such signals induce spurious OPD gradients: \textbf{the student is not merely learning from the teacher, but also inheriting the teacher's input-agnostic biases}. Yet this failure mode is invisible to common OPD diagnostics. Entropy, teacher--student divergence, and thinking-pattern consistency measure how strong, different, or learnable a supervision signal is, but not whether it is grounded in the input.

    Guided by these insights, we propose \textbf{SA-OPD}, a \textbf{S}purious-Signal-\textbf{A}ware \textbf{O}n-\textbf{P}olicy \textbf{D}istillation framework. The key idea is to evaluate whether a token-level teacher--student divergence is actually grounded in the input. For each student-generated trajectory, SA-OPD computes the divergence under two conditions: the original input context and a residual no-prompt context that keeps the same generated prefix while removing the task input. If the divergence barely changes after removing the input, the signal is likely explained by language priors or template preferences rather than input-specific evidence. We use this difference as a lightweight proxy for the \emph{input-groundedness} of the distillation signal. Finally, SA-OPD filters tokens that satisfy two conditions simultaneously: \textbf{low input-groundedness and high absolute teacher--student divergence}, targeting the most harmful part of dense OPD supervision: high-impact updates that are weakly supported by the input. We evaluate SA-OPD in both large language model and vision-language model settings. Across mathematical reasoning, visual understanding, and visual reasoning benchmarks, SA-OPD consistently improves over Vanilla OPD and competitive selective OPD baselines. To sum up, our contributions are threefold:
    \begin{itemize}
        \item We identify and formalize a previously underexplored failure mode in OPD: high-impact distillation signals that are weakly grounded in the input and therefore induce spurious student updates.
        \item We introduce SA-OPD, a practical filtering framework that removes only tokens with both low input-groundedness and high optimization impact, requiring no external verification labels.
        \item Extensive experiments on LLM and VLM distillation demonstrate that SA-OPD consistently outperforms Vanilla OPD and strong selective OPD baselines across mathematical reasoning, visual understanding, and visual reasoning benchmarks.
    \end{itemize}

    \section{Related Work}

    \subsection{On-Policy Distillation}

    OPD has recently emerged as an effective paradigm for post-training. Prior studies show that reverse-KL-style objectives and supervision on student-generated mistakes can improve open-ended generation and reasoning tasks~\citep{gu2026minillmonpolicydistillationlarge,agarwal2024onpolicydistillationlanguagemodels}. Recent work further studies how to make OPD scalable, stable, and generalizable through reward extrapolation, entropy-aware objectives, reasoning-prefix acceleration, competence-aware curricula, divergence constraints, and rollout mixture distillation~\citep{yang2026learningteachergeneralizedonpolicy,zhang2026fasteffectiveonpolicydistillation,luo2026demystifyingopdlengthinflation,hou2026uniopdunifyingonpolicydistillation,jin2026entropyawareonpolicydistillationlanguage,zheng2026scopesignalcalibratedonpolicydistillation,li2026filterreweightrethinkingoptimization}. Meanwhile, OPD has also been extended to self-distillation~\citep{zhao2026selfdistilledreasoneronpolicyselfdistillation,he2026selfdistillationzeroselfrevisionturns,wang2026skillsdskillconditionedselfdistillationmultiturn}, hybrid RL-distillation frameworks~\citep{yan2025learningreasonoffpolicyguidance,zhang2026reinforcementawareknowledgedistillationllm,yang2026selfdistilledrlvr}, and multimodal distillation~\citep{yuan2026visionopdlearningfinedetails,li2026videoopdefficientposttrainingmultimodal}.

    \subsection{Token-level Selection in Post-training}

    Token-level selection has become an increasingly important perspective in post-training, as recent studies show that not all tokens contribute equally to policy improvement. In reinforcement learning, prior studies identify high-entropy forking tokens, advantage--log-probability covariance, decision-critical branch points, and overconfident errors as key drivers of policy improvement or collapse~\citep{wang20258020rulehighentropyminority,cui2025entropymechanismreinforcementlearning,cheng2026reasoning}. These findings suggest that post-training should emphasize tokens corresponding to uncertain decisions, high-impact branches, or confident mistakes, rather than treating all tokens uniformly. A parallel line of work has recently emerged in OPD, where dense teacher supervision makes token-wise allocation especially important. TIP selects tokens that are either high-entropy or low-entropy but highly divergent from the teacher~\citep{xu2026tiptokenimportanceonpolicy}. Token Teachability further filters teacher--student disagreements by their local learnability~\citep{wang2026disagreementlearnabletokenteachability}; Entropy-Aware OPD adapts the KL direction at high-teacher-entropy positions to preserve diversity~\citep{jin2026entropyawareonpolicydistillationlanguage}; FiRe-OPD combines trajectory filtering with token-level soft reweighting to emphasize informative tokens without discarding supervision entirely~\citep{li2026filterreweightrethinkingoptimization}. Collectively, these methods show that OPD benefits from selective dense supervision.

    However, existing criteria mainly ask whether a token is uncertain, divergent, or learnable, but not whether the teacher's judgment is grounded in the current input. Thus, a token may appear informative under prior criteria while its teacher signal is still driven by language priors, formatting conventions, or stereotyped reasoning templates. Our work complements prior token-level selection by filtering high-impact teacher signals that are weakly input-grounded.

    \begin{figure*}[t]
        \centering
        \includegraphics[width=0.98\textwidth]{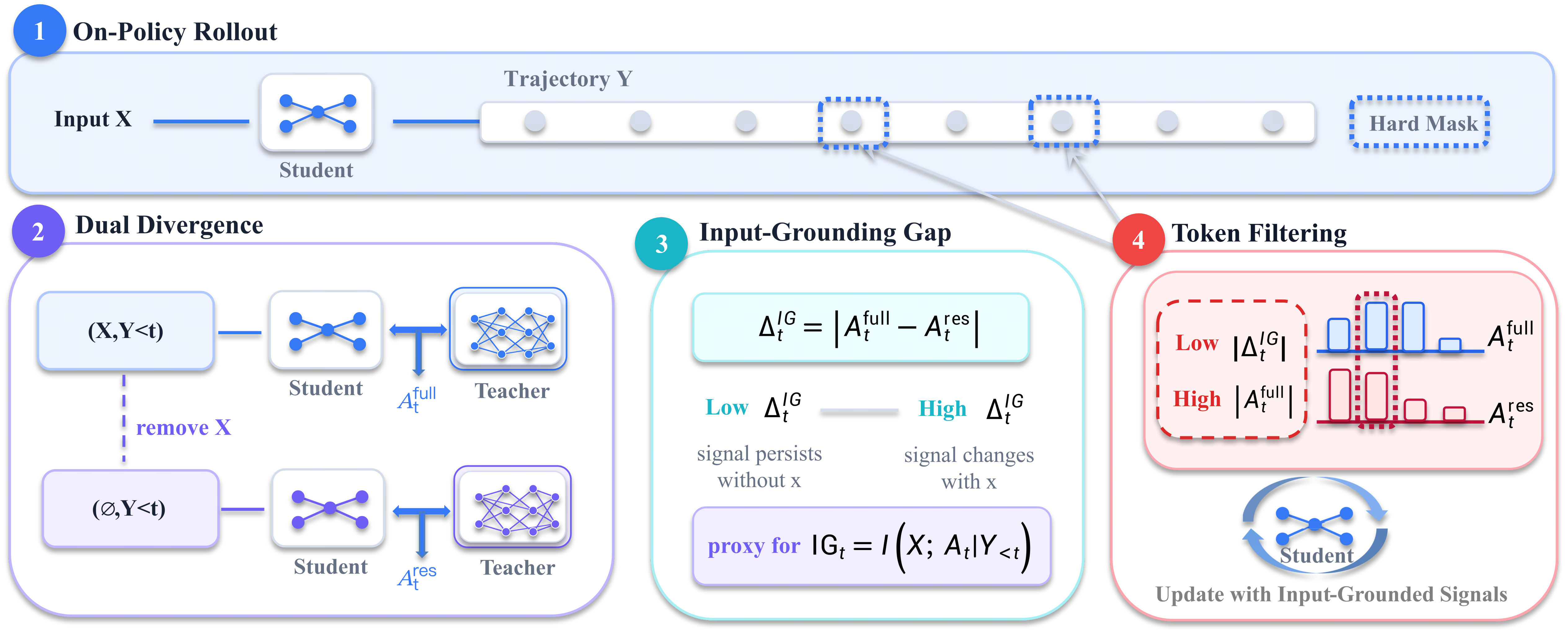}
        \caption{Overview of the proposed SA-OPD framework. Given an on-policy rollout, SA-OPD computes teacher--student divergence under the original prompt and a residual no-prompt context. If the divergence barely changes after prompt removal, the teacher signal is likely dominated by input-agnostic priors. SA-OPD filters tokens that are both weakly input-grounded and high-impact, enabling OPD to update from input-grounded distillation signals rather than spurious signals.}
        \label{fig:sa_opd_overview}
    \end{figure*}

    \section{Method}
    \label{sec:method}

    \subsection{Spurious Signals in OPD}
    \label{sec:spurious-signals}

    We formalize the spurious token-level signals that motivate our filtering criterion. Following the Vanilla OPD setting~\citep{lu2025onpolicydistillation}, let $\pi_{\rm T}$ denote a frozen teacher and $\pi_\theta$ a trainable student. Given an input $x$, the student generates an on-policy response $y=(y_1,\ldots,y_L)\sim\pi_\theta(\cdot\mid x)$. At position $t$, OPD compares the student and teacher distributions at the student-visited prefix $(x,y_{<t})$. The standard reverse-KL objective is
    \begin{equation}
    \mathcal{L}_{\rm OPD}
    =
    \frac{1}{L}
    \sum_{t=1}^{L}
    D_{\rm KL}
    \left(
    \pi_\theta(\cdot\mid x,y_{<t})
    \,\middle\|\,
    \pi_{\rm T}(\cdot\mid x,y_{<t})
    \right).
    \label{eq:opd-reverse-kl}
    \end{equation}

    For the student-sampled token $y_t\sim\pi_\theta(\cdot\mid x,y_{<t})$, define the sampled \emph{teacher-student divergence} as
    \begin{equation}
    A_t
    =
    \log\pi_\theta(y_t\mid x,y_{<t})
    -
    \log\pi_{\rm T}(y_t\mid x,y_{<t}).
    \label{eq:full-reverse-kl-sample}
    \end{equation}
    Treating this coefficient as a stop-gradient weight, the corresponding token-level update is proportional to
    \begin{equation}
    g_t^{\rm OPD}
    =
    -A_t
    \nabla_\theta
    \log\pi_\theta(y_t\mid x,y_{<t}).
    \label{eq:opd-token-gradient}
    \end{equation}

    \paragraph{Input-Grounded and Prior-Driven Signals.}
    To distinguish input-dependent supervision from generic language preferences, we conceptually decompose the token-level divergence as
    \begin{equation}
    A_t
    =
    A_t^{\rm grd}
    +
    A_t^{\rm prior},
    \label{eq:sampled-signal-decomposition}
    \end{equation}
    where $A_t^{\rm grd}$ denotes the component that varies with the current input, whereas $A_t^{\rm prior}$ denotes the component predictable from the response prefix alone.

    Given $s_t=\nabla_\theta
  \log \pi_\theta(y_t\mid x,y_{<t})$, the corresponding OPD update decomposes linearly as

  \begin{equation}
  \begin{aligned}
  g_t^{\rm OPD}
  &=
  -A_t s_t \\
  &=
  -A_t^{\rm grd}s_t
  -A_t^{\rm prior}s_t \\
  &=
  g_t^{\rm grd}
  +
  g_t^{\rm prior}.
  \end{aligned}
  \label{eq:opd-gradient-decomposition}
  \end{equation}
    \paragraph{Alignment with the Task Objective.}
    Let $g_t^\star$ denote an ideal token-level update that improves the downstream task objective. The usefulness of an OPD update is determined not only by its magnitude but also by its alignment with this direction. A reliable distillation update should satisfy
    \begin{equation}
    \mathbb{E}
    \left[
    \left\langle
    g_t^{\rm OPD},g_t^\star
    \right\rangle
    \right]
    >0.
    \label{eq:positive-gradient-alignment}
    \end{equation}
    Using Eq.~\eqref{eq:opd-gradient-decomposition}, its expected alignment can be decomposed as
    \begin{align}
    \mathbb{E}
    \left[
    \left\langle
    g_t^{\rm OPD},g_t^\star
    \right\rangle
    \right]
    &=
    \mathbb{E}
    \left[
    \left\langle
    g_t^{\rm grd},g_t^\star
    \right\rangle
    \right]
    +
    \mathbb{E}
    \left[
    \left\langle
    g_t^{\rm prior},g_t^\star
    \right\rangle
    \right].
    \label{eq:gradient-alignment-decomposition}
    \end{align}
    The grounded component can provide task-relevant supervision because it changes with the input. We therefore expect its coherent component to be positively aligned with the desired update:
    \begin{equation}
    \mathbb{E}
    \left[
    \left\langle
    g_t^{\rm grd},g_t^\star
    \right\rangle
    \right]
    >0.
    \label{eq:grounded-gradient-alignment}
    \end{equation}
    In contrast, when the divergence is dominated by the prior component, the induced gradient may have large magnitude but little alignment with the task-improving direction (Appendix~A.1):
    \begin{equation}
    \mathbb{E}
    \left[
    \left\|
    g_t^{\rm prior}
    \right\|^2
    \right]
    \gg 0,
    \qquad
    \mathbb{E}
    \left[
    \left\langle
    g_t^{\rm prior},g_t^\star
    \right\rangle
    \right]
    \approx 0.
    \label{eq:prior-gradient-no-alignment}
    \end{equation}
    Such updates behave as nuisance gradients: they contribute little coherent task-improving direction, but can substantially increase stochastic gradient energy.

    \paragraph{An Effective Gradient SNR.}
    The preceding decomposition motivates an effective token-level gradient signal-to-noise ratio. We regard the coherent grounded update as signal and the energy of the approximately zero-mean prior update as noise:
    \begin{equation}
    {\rm SNR}_t
    =
    \frac{
    \left\|
    \mathbb{E}
    \left[
    g_t^{\rm grd}
    \right]
    \right\|^2
    }{
    \mathbb{E}
    \left[
    \left\|
    g_t^{\rm prior}
    \right\|^2
    \right]
    }.
    \label{eq:token-snr}
    \end{equation}
    The numerator captures the coherent grounded update after averaging over tokens and contexts, while the denominator captures the second-moment energy of the prior-induced update. Thus, this quantity behaves like a stochastic-gradient SNR when prior-driven variation dominates gradient noise. Input-insensitive tokens have weak grounded signal because their teacher--student disagreement changes little with the input. However, they can still induce large prior-driven updates when the teacher and student strongly disagree on generic tokens or templates (Figure~\ref{fig:problem}). Such tokens therefore yield low-SNR gradients, provide little stable task-improving direction, and induce parameter drift (Appendix~A.2).
    \paragraph{Spurious Signals.}
    \label{para:Spurious-Signals}
    The SNR view suggests that harmful OPD tokens are those whose gradient power is dominated by input-agnostic components rather than input-grounded supervision. We use \emph{Input-Groundedness} as a proxy for this property. Specifically, we quantify the input dependence of the teacher--student divergence signal by the conditional mutual information
    \begin{equation}
    {\rm IG}_t
    =
    I\!\left(X; A_t \mid Y_{<t}\right).
    \label{eq:input-groundedness}
    \end{equation}
    A large ${\rm IG}_t$ indicates that the token-level divergence changes with the input and is therefore more likely to correspond to a high-SNR, input-grounded update. Conversely, a small ${\rm IG}_t$ suggests that the signal is largely predictable from the prefix alone and is therefore at risk of being dominated by language priors or template preferences.

    Finally, spurious signals in OPD should not be defined by low input-groundedness alone. Following the signal-to-curvature view of token importance~\citep{xu2026tiptokenimportanceonpolicy}, a harmful OPD signal should also have high decision impact: its first-order update must be large enough to affect the local optimization dynamics relative to the curvature cost. We therefore define spurious OPD signals as the intersection of two conditions: \textbf{low input-groundedness and high update impact}. This definition targets the most harmful part of dense OPD supervision---large updates that are weakly supported by the input.

    \subsection{Spurious-Signal-Aware OPD (SA-OPD)}

    Motivated by the above diagnosis, SA-OPD turns spurious-signal detection into a practical token-filtering procedure, as illustrated in Figure~\ref{fig:sa_opd_overview}.

    \paragraph{Estimating Input-Groundedness.}
    \label{sec:mi-proxy}

    The mutual information in Eq.~\eqref{eq:input-groundedness} has no closed form for high-dimensional token sequences. We therefore introduce a lightweight empirical proxy that measures whether the distillation signal remains distinguishable when the prompt is removed. Our construction is based on the intuition that an input-grounded distillation signal should be sensitive to the task input: if the input changes or is removed, the teacher-induced signal should change accordingly. For each token, we conduct a dual teacher--student divergence computation under two conditions:
    \begin{equation}
    A_t^{\rm full}
    =
    A_t(x,y_{\leq t}),
    \label{eq:full-signal}
    \end{equation}
    and
    \begin{equation}
    A_t^{\rm res}
    =
    A_t(\varnothing,y_{\leq t}),
    \label{eq:res-signal}
    \end{equation}
    where $A_t^{\rm full}$ is computed with the original prompt and $A_t^{\rm res}$ is computed after removing the prompt while keeping the same student-generated response prefix. The no-prompt signal approximates the component of the teacher judgment that can be explained by generic language priors, formatting conventions, and stereotyped reasoning patterns. We then define the \emph{Input-Grounding Gap}:
    \begin{equation}
    \Delta_t^{\rm IG}
    =
    \left|
    A_t^{\rm full}
    -
    A_t^{\rm res}
    \right|.
    \label{eq:ig-gap}
    \end{equation}
    A large $\Delta_t^{\rm IG}$ indicates that the teacher-induced signal differs substantially between the real-input and no-prompt conditions, suggesting that the judgment is grounded in the specific input. A small $\Delta_t^{\rm IG}$ indicates that the teacher produces nearly the same signal without seeing the prompt, suggesting that the update is dominated by template priors.

    \paragraph{Token-level Filtering.}

    As discussed in Section~\ref{sec:spurious-signals}, we define spurious OPD signals as the co-occurrence of two properties: low input-groundedness and high optimization impact. Specifically, we define a token-level filtering indicator as
    \begin{equation}
    \text{TokenFiltered}_t
    =
    \mathbf{1}
    \left[
    \Delta_t^{\rm IG} < \tau_{\rm IG}
    \right]
    \cdot
    \mathbf{1}
    \left[
    |A_t^{\rm full}| > \tau_A
    \right],
    \label{eq:token-filter-rule}
    \end{equation}
    where $\tau_{\rm IG}$ controls sensitivity to input dependence and $\tau_A$ removes low-impact template-like tokens. The first term in Eq.~\eqref{eq:token-filter-rule} identifies tokens whose teacher judgment is nearly invariant to the input. The second term ensures that we remove only teacher signals that exert a large reward or penalty on the student. In practice, we implement the token filter with a top-$p$ selection rule. For each batch, we select tokens that simultaneously have low input-groundedness and large absolute distillation signal, where $\operatorname{Bottom}_{p_1}(\Delta_t^{\rm IG})$ denotes the tokens with the lowest $p_1$ fraction of input-groundedness scores, and $\operatorname{Top}_{p_2}(|A_t^{\rm full}|)$ denotes the tokens with the largest $p_2$ fraction of absolute distillation signals:
    \begin{equation}
    \mathcal{F}(p_1,p_2)
    =
    \operatorname{Bottom}_{p_1}
    \left(
    \Delta_t^{\rm IG}
    \right)
    \cap
    \operatorname{Top}_{p_2}
    \left(
    |A_t^{\rm full}|
    \right).
    \label{eq:top-p-filter}
    \end{equation}
    We further define the filtered loss-mass ratio (FLMR) of a candidate set $\mathcal{F}$ as
    \begin{equation}
    \mathrm{FLMR}(\mathcal{F})
    =
    \frac{
    \sum_{t\in\mathcal{F}}
    |A_t^{\rm full}|
    }{
    \sum_{t\in\mathcal{V}}
    |A_t^{\rm full}|+\epsilon
    },
    \label{eq:flmr}
    \end{equation}
    where $\mathcal{V}$ denotes all valid response tokens in the batch. FLMR measures how much of the current OPD loss mass would be removed by the filter. Unlike the filtered token ratio, which only counts the number of removed tokens, FLMR reflects the optimization weight of the removed supervision.

    A fixed pair of selection ratios $(p_1,p_2)$ can be brittle across tasks and models. In unstable tasks, overly aggressive filtering may discard valid teacher signals along with spurious ones, weakening useful OPD supervision. We therefore dynamically adapt $p_1$ and $p_2$ during training to keep the removed loss mass bounded, with implementation details provided in Appendix~C:
    \begin{equation}
    \mathrm{FLMR}\!\left(\mathcal{F}(p_1,p_2)\right)
    \leq
    \beta.
    \label{eq:dynamic_beta}
    \end{equation}
    The final filtered tokens are those in $\mathcal{F}(p_1^\star,p_2^\star)$ after adaptation. This mechanism preserves the core criterion---low input dependence and high optimization impact---while preventing the filter from removing an excessive fraction of useful distillation signal.

    Given the final filtered token set $\mathcal{F}$, SA-OPD optimizes the reverse-KL objective only on the retained tokens:
    \begin{equation}
    \mathcal{L}_{\rm SA}
    =
    \frac{1}{|\mathcal{V}\setminus\mathcal{F}|}
    \sum_{\substack{
    t\in\mathcal{V}\\
    t\notin\mathcal{F}
    }}
    D_{\rm KL}
    \left(
    \pi_\theta(\cdot\mid x,y_{<t})
    \,\middle\|\,
    \pi_{\rm T}(\cdot\mid x,y_{<t})
    \right).
    \label{eq:sa-opd-loss}
    \end{equation}

    \section{Experiments}

    \subsection{Settings}

    \subsubsection{Models and Training Data.}
    We conduct all experiments on non-thinking variants from the Qwen3~\citep{yang2025qwen3technicalreport} and Qwen3.5~\citep{qwen3.5} families as both teacher and student policies. Specifically, the main experiments and all analyses are conducted on the Qwen3-4B-Instruct $\rightarrow$ Qwen3-1.7B pair and the Qwen3.5-35B-A3B $\rightarrow$ Qwen3.5-2B pair. To verify the generalization ability of our method, we also evaluate the DeepSeek-R1-0528-Qwen3-8B~\citep{deepseekai2025deepseekr1incentivizingreasoningcapability} $\rightarrow$ Qwen3-1.7B pair and the Qwen3.5-9B $\rightarrow$ Qwen3.5-2B pair. For LLM-based OPD, we use DeepMath~\citep{he2025deepmath103klargescalechallengingdecontaminated} as the source dataset. We keep examples with difficulty level at least 6 and randomly sample 30\% of the remaining data, yielding about 7K math-reasoning training examples. For VLM-based OPD, we construct visual-understanding data by sampling 10\% of the Captioning \& IF, Grounding, and Counting \& Search subsets from VERO-600K~\citep{sarch2026veroopenrlrecipe}, and construct visual-reasoning data by sampling 10\% of MMRL30k~\citep{zhu2026shuffler1efficientrlframework}. More training details are provided in Appendix~B.

    \subsubsection{Evaluation.}
    For VLM-based OPD, we include six benchmarks covering two aspects: (1) visual understanding: EvoChart~\citep{huang2025evochartbenchmarkselftrainingapproach}, MMIFEval~\citep{ding2025mmifenginemultimodalinstructionfollowing}, and CountQA~\citep{tamarapalli2025countqamllmscountwild}; and (2) visual reasoning: MathVision~\citep{wang2024measuringmultimodalmathematicalreasoning}, Geo3K~\citep{lu2021intergpsinterpretablegeometryproblem}, and MathVista~\citep{lu2024mathvistaevaluatingmathematicalreasoning}. For LLM-based OPD, we evaluate mathematical reasoning performance on five benchmarks spanning a range of difficulty levels: AIME 2024, AIME 2025, Math500~\citep{hendrycks2021measuringmathematicalproblemsolving}, AMC 2023, and MinervaMATH~\citep{lewkowycz2022solvingquantitativereasoningproblems}.  All benchmarks are evaluated using the official metrics and evaluations to ensure consistent comparison; evaluation details are provided in Appendix~B.

    \subsubsection{Baselines.}
    We compare SA-OPD against \ding{182} standard OPD methods: Vanilla OPD~\citep{lu2025onpolicydistillation} and ExOPD~\citep{yang2026learningteachergeneralizedonpolicy}; and \ding{183} selective OPD methods: TIP~\citep{xu2026tiptokenimportanceonpolicy} and FiRe-OPD~\citep{li2026filterreweightrethinkingoptimization}. All methods are trained under the same data, model, and compute budget for fair comparison.

    \subsection{Main Results}

    \begin{table*}[t]
    \centering
    \setlength{\tabcolsep}{4.2pt}
    \renewcommand{\arraystretch}{1.08}
    \definecolor{bestblue}{RGB}{232,242,255}
    \begin{tabular}{lcccccccc}
    \toprule
    \multirow{2}{*}{\textbf{Method}}
    & \multicolumn{4}{c}{\textbf{Visual Understanding}}
    & \multicolumn{4}{c}{\textbf{Visual Reasoning}} \\
    \cmidrule(lr){2-5}\cmidrule(lr){6-9}
    & \textbf{EvoChart}
    & \textbf{MMIFEval}
    & \textbf{CountQA}
    & \textbf{Avg.}
    & \textbf{MathVision}
    & \textbf{Geo3K}
    & \textbf{MathVista}
    & \textbf{Avg.} \\
    \midrule
    Student
    & 69.1 & 50.6 & 19.8 & 46.5
    & 38.4 & 63.4 & 63.9 & 55.2 \\
    Teacher
    & 82.0 & 65.0 & 63.0 & 70.0
    & 68.2 & 84.9 & 73.6 & 75.6 \\
    \midrule
    OPD
    & 71.6 & 53.5 & 26.4 & 50.5
    & \underline{45.0} & 67.2 & 69.0 & 60.4 \\
    ExOPD
    & \underline{72.8} & 52.9 & 23.4 & 49.7
    & 43.0 & 68.8 & \underline{71.7} & 61.2 \\
    FiRe-OPD
    & 72.4 & 53.2 & \underline{30.8} & 52.1
    & 42.2 & 66.0 & 69.8 & 59.3 \\
    TIP
    & 72.2 & \underline{54.4} & 30.2 & \underline{52.3}
    & 44.4 & \underline{70.0} & 70.5 & \underline{61.6} \\
    \midrule
    \rowcolor{bestblue}
    \textbf{SA-OPD}
    & \textbf{73.2}
    & \textbf{55.2}
    & \textbf{33.6}
    & \textbf{54.0}
    & \textbf{46.2}
    & \textbf{72.2}
    & \textbf{72.2}
    & \textbf{63.5} \\
    $\Delta$ over OPD
    & +1.6 & +1.7 & +7.2 & +3.5
    & +1.2 & +5.0 & +3.2 & +3.1 \\
    \bottomrule
    \end{tabular}
    \caption{Results on visual understanding and visual reasoning benchmarks under Qwen3.5-35B-A3B $\rightarrow$ Qwen3.5-2B. Best and second-best results among OPD methods are shown in \textbf{bold} and \underline{underlined}, respectively.}
    \label{tab:visual_results}
    \end{table*}

    \begin{table}[t]
    \centering
    \small
    \setlength{\tabcolsep}{1.0pt}
    \renewcommand{\arraystretch}{1.05}
    \definecolor{bestblue}{RGB}{220,235,255}
    \begin{tabular}{lcccccc}
    \toprule
    \textbf{Method}
    & \multicolumn{6}{c}{\textbf{Math Reasoning}} \\
    \cmidrule(lr){2-7}
    & \textbf{Math500}
    & \textbf{AMC23}
    & \textbf{AIME25}
    & \textbf{AIME24}
    & \textbf{Minerva}
    & \textbf{Avg.} \\
    \midrule
    Student
    & 36.4 & 16.6 & 0.8 & 1.7 & 6.7 & 12.4 \\
    Teacher
    & 80.8 & 93.8 & 46.2 & 63.3 & 27.7 & 62.4 \\
    \midrule
    OPD
    & 66.4 & 43.8 & 6.2 & 9.2 & 16.7 & 28.5 \\
    ExOPD
    & 66.2 & \textbf{44.7} & 4.6 & \underline{11.2} & \underline{17.6} & 28.9 \\
    FiRe-OPD
    & 65.8 & 43.2 & \textbf{8.3} & \underline{11.2} & 17.4 & 29.2 \\
    TIP
    & \underline{66.6} & 44.1 & \underline{7.5} & \textbf{11.7} & 16.8 & \underline{29.3} \\
    \midrule
    \rowcolor{bestblue}
    \textbf{SA-OPD}
    & \textbf{69.6}
    & \textbf{44.7}
    & \textbf{8.3}
    & \textbf{11.7}
    & \textbf{17.8}
    & \textbf{30.4} \\
    $\Delta$ over OPD
    & +3.2 & +0.9 & +2.1 & +2.5 & +1.1 & +1.9 \\
    \bottomrule
    \end{tabular}
    \caption{Results on math reasoning benchmarks under Qwen3-4B-Instruct $\rightarrow$ Qwen3-1.7B. Best and second-best results among OPD methods are shown in \textbf{bold} and \underline{underlined}, respectively.}
    \label{tab:math_results}
    \end{table}

    \paragraph{Performance.}
    Tables~\ref{tab:visual_results} and~\ref{tab:math_results} show that SA-OPD consistently improves OPD across both VLM and LLM settings. On visual understanding and visual reasoning benchmarks, SA-OPD achieves the best result on all six tasks. Compared with Vanilla OPD, SA-OPD improves the average score from 50.5 to 54.0 on visual understanding and from 60.4 to 63.5 on visual reasoning. Moreover, SA-OPD also consistently outperforms the strongest prior OPD variant on every visual benchmark, with notable gains on CountQA (+2.8), Geo3K (+2.2), and MathVision (+1.2). The same trend holds for mathematical reasoning. SA-OPD obtains the best or tied-best result on all five benchmarks and achieves the highest average score, improving Vanilla OPD from 28.5 to 30.4 and TIP from 29.3 to 30.4. Notably, SA-OPD improves Math500 by 3.0 points over the strongest prior OPD baseline. Overall, SA-OPD provides a more reliable distillation objective by preserving useful teacher knowledge while suppressing spurious high-impact updates. We provide computation overhead analysis in Appendix~D.

    \paragraph{Scalability.}
    We further evaluate SA-OPD under different teacher--student size settings to examine whether its effectiveness depends on a specific model scale or teacher strength. Table~\ref{tab:opd_vertical_results} shows that SA-OPD consistently yields positive gains across model sizes in both VLM and LLM settings, suggesting strong robustness to teacher--student scale variations.

    \begin{table}[t]
    \centering
    \small
    \setlength{\tabcolsep}{1.5pt}
    \renewcommand{\arraystretch}{1.12}
    \definecolor{bestblue}{RGB}{220,235,255}
    \definecolor{groupgray}{RGB}{245,245,245}
    \begin{tabular}{p{0.22\columnwidth}lcccc}
    \toprule
    \textbf{Model Pair}
    & \textbf{Benchmark}
    & \textbf{Student}
    & \textbf{Teacher}
    & \textbf{OPD}
    & \textbf{SA-OPD} \\
    \midrule
    \rowcolor{groupgray}
    \multicolumn{6}{l}{\textit{VLM Distillation}} \\
    \multirow{6}{0.22\columnwidth}{
    \centering
    \makecell[l]{
    Qwen3.5-9B\\
    $\qquad\downarrow$\\
    Qwen3.5-2B
    }}
    & \makecell[r]{EvoChart} & 28.1 & 80.2 & 70.8 & \cellcolor{bestblue}\textbf{74.8} \\
    & \makecell[r]{MMIFEval} & 50.6 & 67.4 & 54.4 & \cellcolor{bestblue}\textbf{55.7} \\
    & \makecell[r]{CountQA} & 19.8 & 58.2 & 32.2 & \cellcolor{bestblue}\textbf{35.6} \\
    & \makecell[r]{MathVision} & 38.4 & 78.9 & 43.4 & \cellcolor{bestblue}\textbf{46.6} \\
    & \makecell[r]{Geo3K} & 63.4 & 80.2 & 67.7 & \cellcolor{bestblue}\textbf{69.1} \\
    & \makecell[r]{MathVista} & 63.9 & 68.8 & 69.0 & \cellcolor{bestblue}\textbf{69.1} \\
    \midrule
    \rowcolor{groupgray}
    \multicolumn{6}{l}{\textit{LLM Distillation}} \\
    \multirow{5}{0.22\columnwidth}{
    \centering
    \makecell[l]{
    R1-Qwen3-8B\\
    $\qquad\downarrow$\\
    Qwen3-1.7B
    }}
    & \makecell[r]{Math500} & 36.4 & 65.0 & 58.4 & \cellcolor{bestblue}\textbf{60.6} \\
    & \makecell[r]{AMC23} & 16.6 & 76.6 & 35.3 & \cellcolor{bestblue}\textbf{37.8} \\
    & \makecell[r]{AIME25} & 0.8 & 28.8 & 5.4 & \cellcolor{bestblue}\textbf{6.3} \\
    & \makecell[r]{AIME24} & 1.7 & 38.3 & 5.8 & \cellcolor{bestblue}\textbf{7.5} \\
    & \makecell[r]{Minerva} & 6.7 & 29.0 & 13.9 & \cellcolor{bestblue}\textbf{14.7} \\
    \bottomrule
    \end{tabular}
    \caption{Scalability comparison between SA-OPD and Vanilla OPD. R1-Qwen3-8B denotes DeepSeek-R1-0528-Qwen3-8B. Best results are shown in \textbf{bold}.}
    \label{tab:opd_vertical_results}
    \end{table}

    \subsection{Ablations}

    \begin{table}[t]
    \centering
    \small
    \setlength{\tabcolsep}{5pt}
    \renewcommand{\arraystretch}{1.08}
    \begin{tabular}{lcc}
    \toprule
    \textbf{Filter Variant} & \textbf{Geo3K} & \textbf{MathVista} \\
    \midrule
    \emph{Base} & 67.2 & 69.0 \\
    \midrule
    \emph{Random Filter} & 67.6 & 67.8 \\
    \emph{Divergence-only Filter} & 69.8 & 69.6 \\
    \emph{Input-Groundedness-only Filter} & 69.0 & 70.3 \\
    \emph{Proxy Replacement} & 69.6 & 71.5 \\
    \midrule
    \textbf{SA-OPD} & \textbf{72.2} & \textbf{72.1} \\
    \bottomrule
    \end{tabular}
    \caption{Ablation study of SA-OPD.}
    \label{tab:ablation}
    \end{table}

    We ablate the key components of our filtering strategy in Table~\ref{tab:ablation}. For a fair comparison, all filtering-based variants are calibrated to remove approximately the same proportion of tokens, with the filtering ratio controlled within about 1 percentage point (pp). We compare: \ding{182} \textbf{Random Filter}, which randomly removes the same proportion of tokens; \ding{183} \textbf{Divergence-only Filter}, which filters tokens solely by extreme teacher--student divergence, without considering whether the signal is input-grounded; \ding{184} \textbf{Input-Groundedness-only Filter}, which filters only low-input-groundedness tokens without considering their optimization impact; and \ding{185} \textbf{Proxy Replacement}, which replaces the input-dependent component in the input-groundedness proxy with the corresponding teacher log-probability.

    The ablation results show that both axes of our criterion are necessary. Divergence-only filtering improves over the base model, but remains inferior to SA-OPD because large divergence may also correspond to useful, input-grounded correction signals. Input-Groundedness-only filtering also helps, especially on MathVista, but is suboptimal since many low-groundedness tokens have limited optimization impact. Proxy Replacement performs better than the single-criterion variants, yet still lags behind SA-OPD, suggesting that teacher confidence is less aligned with the actual OPD update signal. Overall, SA-OPD performs best on both benchmarks, confirming that harmful tokens are better identified by combining low input-groundedness with high optimization impact.

    \subsection{Additional Analyses}
    \begin{figure}[t]
    \centering
    \includegraphics[width=0.99\columnwidth]{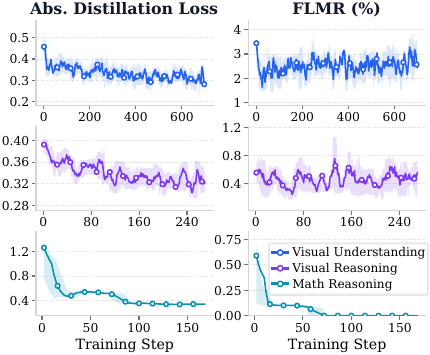}
    \caption{Training dynamics across tasks.}
    \label{fig:td}
    \end{figure}

    \subsubsection{Training Differences across Tasks.}
    Figure~\ref{fig:td} reveals a clear difference between LLM-based and VLM-based distillation. In math reasoning, the teacher--student divergence is large only at the beginning of training: the absolute distillation-loss change drops rapidly and then remains nearly flat. In contrast, both visual understanding and visual reasoning exhibit persistent fluctuations throughout training, indicating that VLM OPD receives a much less stationary teacher signal. This is expected: VLM teachers must jointly resolve visual grounding, perception uncertainty, and language priors, making their token-level judgments more susceptible to visually plausible but input-agnostic shortcuts~\citep{guan2023hallusionbench,wang2024mitigatinghallucinationslargevisionlanguage}.

    To quantify how much of the OPD objective is occupied by such spurious signals during training, we track FLMR, defined in Eq.~\eqref{eq:flmr}. Since FLMR weights filtered tokens by their absolute distillation signal, it reflects the fraction of the gradient budget associated with weakly input-grounded, high-impact supervision, rather than merely counting how many tokens are removed. The results show clear differences in spurious signals across tasks. For LLM mathematical reasoning, FLMR drops rapidly to nearly zero in the early stage of training, suggesting that high-impact spurious signals are mainly concentrated in the initial alignment phase. Once the student distribution stabilizes, such signals largely disappear. In contrast, VLM tasks maintain a non-zero filtered loss mass throughout training, indicating that spurious supervision is not merely transient optimization noise, but a persistent source of dense supervision contamination. This also explains why SA-OPD brings larger gains on VLM tasks: when OPD gradients are continuously and structurally contaminated by spurious signals, input-dependence-based filtering can more effectively improve the signal-to-noise ratio of training.

    \subsubsection{Hyper-parameter Analysis.}

    \begin{figure}[t]
    \centering
    \begin{minipage}[t]{0.5\columnwidth}
    \centering
    \includegraphics[width=\linewidth]{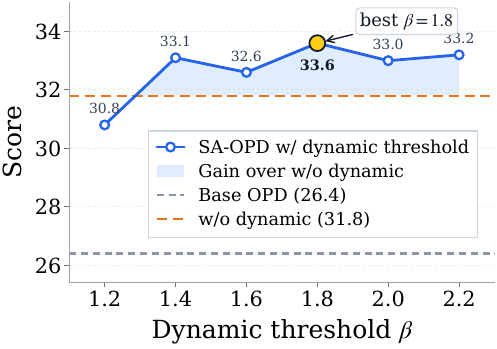}
    \small (a) Sensitivity study on the dynamic threshold $\beta$.
    \end{minipage}
    \hfill
    \begin{minipage}[t]{0.49\columnwidth}
    \centering
    \includegraphics[width=\linewidth]{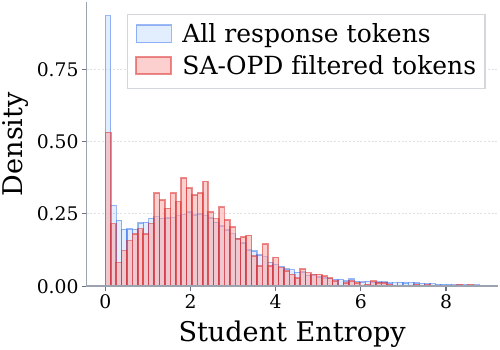}
    \small (b) Entropy distribution in the visual understanding setting.
    \end{minipage}
    \caption{Analysis of SA-OPD's dynamic threshold and filtered-token entropy distribution.}
    \label{fig:beta-and-entropy}
    \end{figure}

    For visual reasoning and mathematical reasoning tasks, the FLMR under fixed selection ratios $p_1$ and $p_2$ is relatively small, around 1pp--2pp at the beginning of training. In contrast, the FLMR in visual understanding tasks is much higher, indicating that the filtered tokens account for a larger portion of the loss mass. If the filtering is too aggressive, it may mistakenly remove tokens that still carry useful perceptual information. To reduce this risk, we adopt the dynamic threshold adjustment in Eq.~\eqref{eq:dynamic_beta}. Results in Figure~\ref{fig:beta-and-entropy}(a) show that when $\beta$ is not too low, dynamic thresholding consistently improves performance. The best result is achieved at $\beta=1.8$, where the counting score increases from 30.8 to 33.6. This suggests that imposing a moderate constraint on FLMR can better balance the suppression of spurious signals and the preservation of effective visual supervision.

    \subsubsection{Filtered Token Analysis.}

    \begin{table}[t]
    \centering
    \small
    \setlength{\tabcolsep}{5pt}
    \renewcommand{\arraystretch}{1.08}
    \begin{tabular}{lcc}
    \toprule
    \textbf{Category / Property} & \textbf{Count} & \textbf{Ratio} \\
    \midrule
    \rowcolor{gray!12}
    \multicolumn{3}{l}{\textit{All Filtered Tokens}} \\
    Content word & 1375 & 69.9\% \\
    Punctuation / format & 407 & 20.7\% \\
    Function word & 121 & 6.2\% \\
    Reasoning-template word & 30 & 1.5\% \\
    Number & 28 & 1.4\% \\
    \midrule
    \rowcolor{gray!12}
    \multicolumn{3}{l}{\textit{Content Tokens}} \\
    Token seen before & 1185 & 60.2\% \\
    Repeated bigram & 548 & 27.9\% \\
    Repeated trigram & 273 & 13.9\% \\
    \bottomrule
    \end{tabular}
    \caption{Token-level statistics of SA-OPD filtered tokens in the visual understanding setting.}
    \label{tab:token_analysis}
    \end{table}

    We further analyze the tokens removed by SA-OPD. Specifically, we randomly sample 500 examples from visual understanding benchmarks and examine the characteristics of the filtered positions. As shown in Table~\ref{tab:token_analysis}, the filtered tokens are not merely formatting artifacts: content words account for 69.9\% of all filtered positions. Nevertheless, many of these content tokens are highly predictable from the local textual context. High-frequency filtered tokens also include punctuation and function words with very high fixedness ratios. Additional visualizations of SA-OPD-filtered tokens are provided in Appendix~E.

    Moreover, we examine the distribution of SA-OPD-filtered tokens along the student-entropy dimension. Figure~\ref{fig:beta-and-entropy}(b) shows that the filtered tokens are not confined to high-entropy regions. Instead, a substantial fraction of them correspond to low-entropy tokens that are nevertheless high-impact and weakly input-dependent. This indicates that entropy alone cannot fully capture spurious signals and shows that SA-OPD, by explicitly measuring input-groundedness, complements entropy-based selection and captures failure cases that entropy-based criteria tend to miss.

    \section{Conclusion}

    In this work, we identify spurious signals as a key unexplored failure mode in on-policy distillation: dense teacher supervision can be high-impact yet weakly grounded in the task input, triggering the student to inherit input-agnostic language priors, formatting conventions, or stereotyped reasoning templates from the teacher. To address this issue, we propose SA-OPD, a spurious-signal-aware OPD framework that estimates the input-groundedness of token-level teacher--student divergence by comparing the original input context with a residual no-prompt context. SA-OPD filters only tokens that are both weakly input-grounded and high-impact, thereby suppressing misleading dense supervision while preserving useful teacher knowledge. Experiments across VLM and LLM settings demonstrate that SA-OPD consistently improves over Vanilla OPD and competitive selective OPD baselines. Overall, our results suggest that input-groundedness is a crucial axis for reliable dense supervision in OPD, and that filtering high-impact but weakly grounded teacher signals can further improve the effectiveness and stability of on-policy distillation.

\bigskip
\bibliography{aaai2027}

\clearpage
\onecolumn
\appendix

\setcounter{secnumdepth}{2}
\renewcommand{\thesection}{\Alph{section}}
\renewcommand{\thesubsection}{\thesection.\arabic{subsection}}

  \section{Supplementary Theory}
  \label{app:theory}

  \subsection{Prior-Induced Gradients Have Weak Input-Specific Alignment}
  \label{app:prior-alignment}

Recall the sampled
  teacher--student divergence and its score-function update:
  \begin{equation}
  A_t
  =
  \log\pi_\theta(Y_t\mid X,Y_{<t})
  -
  \log\pi_{\rm T}(Y_t\mid X,Y_{<t}),
  \end{equation}

  \begin{equation}
  g_t^{\rm OPD}
  =
  -A_t s_t,
  \label{eq:app-opd-update}
  \end{equation}
  where
  \begin{equation}
  s_t
  =
  \nabla_\theta
  \log\pi_\theta(Y_t\mid X,Y_{<t}).
  \end{equation}

  Let $C_t=(Y_{<t},Y_t)$ denote the response context excluding the task input.
  We define
  \begin{equation}
  A_t^{\rm prior}
  =
  \mathbb{E}[A_t\mid C_t],
  \qquad
  A_t^{\rm grd}
  =
  A_t-A_t^{\rm prior},
  \label{eq:app-prior-grounded-definition}
  \end{equation}
  so that
  \begin{equation}
  A_t=A_t^{\rm grd}+A_t^{\rm prior},
  \qquad
  \mathbb{E}[A_t^{\rm grd}\mid C_t]=0.
  \end{equation}
  The corresponding prior-induced update is
  \begin{equation}
  g_t^{\rm prior}
  =
  -A_t^{\rm prior}s_t.
  \label{eq:app-prior-gradient}
  \end{equation}

  To isolate input-specific task improvement, define
  \begin{equation}
  \widetilde g_t^\star
  =
  g_t^\star-\mathbb{E}[g_t^\star\mid C_t],
  \qquad
  \mathbb{E}[\widetilde g_t^\star\mid C_t]=0.
  \label{eq:app-input-specific-gradient}
  \end{equation}
  In the main text, $g_t^\star$ is used as shorthand for this input-specific
  direction.

  \begin{theorem}[Weak alignment of prior-induced gradients]
  \label{thm:prior-gradient-alignment}
  Let
  \begin{equation}
  \delta s_t
  =
  s_t-\mathbb{E}[s_t\mid C_t].
  \end{equation}
  Then
  \begin{equation}
  \frac{
  \left|
  \mathbb{E}
  \left[
  \left\langle
  g_t^{\rm prior},
  \widetilde g_t^\star
  \right\rangle
  \right]
  \right|
  }{
  \sqrt{\mathbb{E}\|g_t^{\rm prior}\|^2}
  \sqrt{\mathbb{E}\|\widetilde g_t^\star\|^2}
  }
  \leq
  \kappa_t,
  \label{eq:app-normalized-alignment-bound}
  \end{equation}
  where
  \begin{equation}
  \kappa_t^2
  =
  \frac{
  \mathbb{E}
  \left[
  (A_t^{\rm prior})^2
  \|\delta s_t\|^2
  \right]
  }{
  \mathbb{E}
  \left[
  (A_t^{\rm prior})^2
  \|s_t\|^2
  \right]
  }.
  \label{eq:app-score-sensitivity-ratio}
  \end{equation}
  Hence, if the score direction is weakly input-dependent after conditioning on
  $C_t$, i.e., $\kappa_t\ll1$, the prior-induced gradient has weak
  input-specific alignment even when its gradient energy is large.
  \end{theorem}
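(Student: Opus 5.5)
The plan is to exploit the fact that $A_t^{\rm prior}=\mathbb{E}[A_t\mid C_t]$ is $C_t$-measurable, while $\widetilde g_t^\star$ has zero conditional mean given $C_t$. First write the numerator as
\begin{equation*}
\mathbb{E}\left[\left\langle g_t^{\rm prior},\widetilde g_t^\star\right\rangle\right]
=
-\mathbb{E}\left[A_t^{\rm prior}\left\langle s_t,\widetilde g_t^\star\right\rangle\right].
\end{equation*}
Then split $s_t=\mathbb{E}[s_t\mid C_t]+\delta s_t$. The first piece, $\mathbb{E}[s_t\mid C_t]$, is $C_t$-measurable, as is $A_t^{\rm prior}$. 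By the tower property, its contribution is
\begin{equation*}
\mathbb{E}\left[A_t^{\rm prior}\left\langle \mathbb{E}[s_t\mid C_t],\,\mathbb{E}[\widetilde g_t^\star\mid C_t]\right\rangle\right]=0,
\end{equation*}
using Eq.~\eqref{eq:app-input-specific-gradient}. This leaves only
\begin{equation*}
-\mathbb{E}\left[A_t^{\rm prior}\left\langle\delta s_t,\widetilde g_t^\star\right\rangle\right],
\end{equation*}
so only the input-dependent fluctuation of the score can correlate with the input-specific target.

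Next apply Cauchy--Schwarz, first pointwise in the inner product and then in expectation:
\begin{equation*}
\left|\mathbb{E}\left[A_t^{\rm prior}\left\langle\delta s_t,\widetilde g_t^\star\right\rangle\right]\right|
\le
\sqrt{\mathbb{E}\left[(A_t^{\rm prior})^2\|\delta s_t\|^2\right]}\,
\sqrt{\mathbb{E}\|\widetilde g_t^\star\|^2}.
\end{equation*}
To normalize, note that $\mathbb{E}\|g_t^{\rm prior}\|^2=\mathbb{E}\left[(A_t^{\rm prior})^2\|s_t\|^2\right]$ directly from Eq.~\eqref{eq:app-prior-gradient}. Dividing by $\sqrt{\mathbb{E}\|g_t^{\rm prior}\|^2}\sqrt{\mathbb{E}\|\widetilde g_t^\star\|^2}$ then gives exactly the ratio $\kappa_t$ of Eq.~\eqref{eq:app-score-sensitivity-ratio}.

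The main obstacle is the measurability claim in the first step. Conditioning on $C_t=(Y_{<t},Y_t)$ fixes the sampled token, but $s_t$ still depends on $X$. Hence $\mathbb{E}[s_t\mid C_t]$ is a well-defined $C_t$-measurable random vector, and that is all the argument needs. I would also state mild integrability assumptions: finite second moments of $A_t^{\rm prior}s_t$ and of $\widetilde g_t^\star$, and a nonzero denominator. These justify the tower property and the division. As a side remark, $\kappa_t\le1$ holds because $\mathbb{E}[\|\delta s_t\|^2\mid C_t]\le\mathbb{E}[\|s_t\|^2\mid C_t]$ and $(A_t^{\rm prior})^2$ is $C_t$-measurable, so the bound is never vacuous beyond the trivial Cauchy--Schwarz bound.
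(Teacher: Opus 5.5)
Your proposal is correct and follows the paper's proof essentially step for step. You split $s_t=\mathbb{E}[s_t\mid C_t]+\delta s_t$, eliminate the $C_t$-measurable part via the tower property and $\mathbb{E}[\widetilde g_t^\star\mid C_t]=0$, apply Cauchy--Schwarz, and normalize using $\mathbb{E}\|g_t^{\rm prior}\|^2=\mathbb{E}[(A_t^{\rm prior})^2\|s_t\|^2]$. Your integrability assumptions and the side remark that $\kappa_t\le 1$ are valid refinements the paper leaves implicit; the latter follows from $\mathbb{E}[\|s_t\|^2\mid C_t]=\|\mathbb{E}[s_t\mid C_t]\|^2+\mathbb{E}[\|\delta s_t\|^2\mid C_t]$.
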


\begin{proof}
Recall that
\begin{equation}
g_t^{\rm prior}
=
-A_t^{\rm prior}s_t .
\end{equation}
We decompose the score function into its response-context-predictable component
and input-dependent residual:
\begin{equation}
s_t
=
\bar s_t+\delta s_t,
\qquad
\bar s_t
=
\mathbb{E}[s_t\mid C_t],
\qquad
\mathbb{E}[\delta s_t\mid C_t]=0.
\label{eq:app-score-decomposition}
\end{equation}
By definition, the input-specific ideal update satisfies
\begin{equation}
\mathbb{E}[\widetilde g_t^\star \mid C_t]=0 .
\label{eq:app-ideal-centered}
\end{equation}

Since both $A_t^{\rm prior}$ and $\bar s_t$ are measurable with respect to
$C_t$, the tower property gives
\begin{align}
&
\mathbb{E}
\left[
\left\langle
-A_t^{\rm prior}\bar s_t,
\widetilde g_t^\star
\right\rangle
\right]
\nonumber\\
&=
-\mathbb{E}
\left[
A_t^{\rm prior}
\left\langle
\bar s_t,
\mathbb{E}
\left[
\widetilde g_t^\star \mid C_t
\right]
\right\rangle
\right]
=0 .
\label{eq:app-context-score-orthogonality}
\end{align}
Therefore, after substituting
$s_t=\bar s_t+\delta s_t$ into $g_t^{\rm prior}$, the alignment between the
prior-induced gradient and the input-specific task direction is
\begin{align}
\mathbb{E}
\left[
\left\langle
g_t^{\rm prior},
\widetilde g_t^\star
\right\rangle
\right]
&=
\mathbb{E}
\left[
\left\langle
-A_t^{\rm prior}(\bar s_t+\delta s_t),
\widetilde g_t^\star
\right\rangle
\right]
\nonumber\\
&=
-\mathbb{E}
\left[
A_t^{\rm prior}
\left\langle
\delta s_t,
\widetilde g_t^\star
\right\rangle
\right].
\label{eq:app-prior-residual-alignment}
\end{align}
Thus, the prior-induced gradient can align with the input-specific task
direction only through the input-dependent score residual $\delta s_t$.

Applying Cauchy--Schwarz yields
\begin{align}
&
\left|
\mathbb{E}
\left[
\left\langle
g_t^{\rm prior},
\widetilde g_t^\star
\right\rangle
\right]
\right|
\nonumber\\
&\leq
\sqrt{
\mathbb{E}
\left[
(A_t^{\rm prior})^2
\|\delta s_t\|^2
\right]
}
\sqrt{
\mathbb{E}
\left[
\|\widetilde g_t^\star\|^2
\right]
}.
\label{eq:app-prior-cauchy-schwarz}
\end{align}
Moreover, by Eq.~\eqref{eq:app-prior-gradient},
\begin{equation}
\mathbb{E}\|g_t^{\rm prior}\|^2
=
\mathbb{E}
\left[
(A_t^{\rm prior})^2
\|s_t\|^2
\right].
\label{eq:app-prior-gradient-energy}
\end{equation}
Assuming the denominators are nonzero, normalizing
Eq.~\eqref{eq:app-prior-cauchy-schwarz} by
$\sqrt{\mathbb{E}\|g_t^{\rm prior}\|^2}
\sqrt{\mathbb{E}\|\widetilde g_t^\star\|^2}$
gives
\begin{equation}
\frac{
\left|
\mathbb{E}
\left[
\left\langle
g_t^{\rm prior},
\widetilde g_t^\star
\right\rangle
\right]
\right|
}{
\sqrt{\mathbb{E}\|g_t^{\rm prior}\|^2}
\sqrt{\mathbb{E}\|\widetilde g_t^\star\|^2}
}
\leq
\sqrt{
\frac{
\mathbb{E}
\left[
(A_t^{\rm prior})^2
\|\delta s_t\|^2
\right]
}{
\mathbb{E}
\left[
(A_t^{\rm prior})^2
\|s_t\|^2
\right]
}
}.
\end{equation}
This completes the proof.
\end{proof}

  For generic or template-like tokens, the student score direction varies only
  weakly with the task input once the response context is fixed, corresponding
  to $\kappa_t\ll1$. Theorem~\ref{thm:prior-gradient-alignment} therefore gives
  \begin{equation}
  \mathbb{E}
  \left[
  \left\langle
  g_t^{\rm prior},
  \widetilde g_t^\star
  \right\rangle
  \right]
  \approx 0,
  \label{eq:app-prior-gradient-no-alignment}
  \end{equation}
  up to the scale of the two gradients. Importantly, this does not require
  \begin{equation}
  \mathbb{E}\|g_t^{\rm prior}\|^2
  =
  \mathbb{E}
  \left[
  (A_t^{\rm prior})^2\|s_t\|^2
  \right]
  \label{eq:app-prior-large-gradient-energy}
  \end{equation}
  to be small. Thus, prior-driven updates can consume substantial gradient energy
  while contributing little coherent input-specific task improvement.

  \subsection{Low-SNR OPD Updates Induce Parameter Drift}
    \label{app:parameter-drift}

    We provide an illustrative analysis of why sustained low-SNR OPD updates can
    be harmful even when their prior-driven component has approximately zero mean.

    \paragraph{Setup.}
    We use $k\in\{0,\ldots,K-1\}$ to index optimization steps. At step $k$, let
    $\mathcal{B}_k$ denote the set of valid response-token indices in the minibatch.
    For each token $t\in\mathcal{B}_k$, define the score
    \begin{equation}
    s_{k,t}
    =
    \nabla_\theta
    \log \pi_\theta(y_{k,t}\mid x_k,y_{k,<t}),
    \label{eq:appendix-token-score}
    \end{equation}
    and the sampled teacher--student divergence
    \begin{equation}
    A_{k,t}
    =
    \log\pi_\theta(y_{k,t}\mid x_k,y_{k,<t})
    -
    \log\pi_{\rm T}(y_{k,t}\mid x_k,y_{k,<t}).
    \label{eq:appendix-token-divergence}
    \end{equation}
    As in Eq.~(4), we conceptually
    decompose
    \begin{equation}
    A_{k,t}
    =
    A_{k,t}^{\rm grd}
    +
    A_{k,t}^{\rm prior},
    \label{eq:appendix-token-divergence-decomposition}
    \end{equation}
    where $A_{k,t}^{\rm grd}$ is the input-grounded component and
    $A_{k,t}^{\rm prior}$ is the prefix-predictable prior component. The
    corresponding token-level OPD update is
    \begin{equation}
    g_{k,t}^{\rm OPD}
    =
    -A_{k,t}s_{k,t}
    =
    g_{k,t}^{\rm grd}
    +
    g_{k,t}^{\rm prior},
    \label{eq:appendix-token-update-decomposition}
    \end{equation}
    with
    \begin{equation}
    g_{k,t}^{\rm grd}
    =
    -A_{k,t}^{\rm grd}s_{k,t},
    \qquad
    g_{k,t}^{\rm prior}
    =
    -A_{k,t}^{\rm prior}s_{k,t}.
    \label{eq:appendix-grounded-prior-token-updates}
    \end{equation}

    The minibatch OPD update is the average of these token-level updates:
    \begin{equation}
    G_k^{\rm OPD}
    =
    \frac{1}{|\mathcal{B}_k|}
    \sum_{t\in\mathcal{B}_k}
    g_{k,t}^{\rm OPD}
    =
    G_k^{\rm grd}
    +
    G_k^{\rm prior},
    \label{eq:batch-gradient-decomposition}
    \end{equation}
    where
    \begin{equation}
    G_k^{\rm grd}
    =
    \frac{1}{|\mathcal{B}_k|}
    \sum_{t\in\mathcal{B}_k}
    g_{k,t}^{\rm grd},
    \qquad
    G_k^{\rm prior}
    =
    \frac{1}{|\mathcal{B}_k|}
    \sum_{t\in\mathcal{B}_k}
    g_{k,t}^{\rm prior}.
    \label{eq:batch-grounded-prior-gradients}
    \end{equation}

    Let $\mathscr{H}_k$ denote the optimization history before the minibatch at
    step $k$ is sampled. We decompose the batch update into a coherent grounded
    signal and a nuisance component:
    \begin{equation}
    \mu_k
    =
    \mathbb{E}
    \left[
    G_k^{\rm grd}\mid\mathscr{H}_k
    \right],
    \qquad
    \xi_k
    =
    G_k^{\rm OPD}-\mu_k.
    \label{eq:signal-noise-batch-decomposition}
    \end{equation}
    Here, $\mu_k$ is the predictable input-grounded update, while $\xi_k$ contains
    the prior-driven update $G_k^{\rm prior}$ and the residual sampling variation
    of the grounded component.

 we then define the effective batch-level SNR as
    \begin{equation}
    \operatorname{SNR}_k
    =
    \frac{
    \|\mu_k\|^2
    }{
    \mathbb{E}
    \left[
    \|\xi_k\|^2
    \mid\mathscr{H}_k
    \right]
    }.
    \label{eq:batch-effective-snr}
    \end{equation}
    When prior-driven variation dominates the stochastic component, the denominator
    is governed mainly by the second-moment energy of $G_k^{\rm prior}$.

    \paragraph{Assumption.}
   We model the nuisance component
    as a martingale-difference sequence:
    \begin{equation}
    \mathbb{E}
    \left[
    \xi_k\mid\mathscr{H}_k
    \right]
    =0,
    \qquad
    \mathbb{E}
    \left[
    \|\xi_k\|^2\mid\mathscr{H}_k
    \right]
    =
    v_k
    <\infty.
    \label{eq:nuisance-martingale-assumption}
    \end{equation}
    This assumption does not require nuisance updates to be small. It only requires
    that, conditioned on the current optimization history, they do not provide a
    persistent directional signal. In SA-OPD, this corresponds to the regime where
    low-input-grounded, prior-driven OPD updates have little expected
    task-improving alignment but non-negligible second-moment energy.

    \begin{theorem}[Parameter drift under low-SNR OPD updates]
    \label{thm:opd-parameter-drift}
    Consider the local SGD-style recursion
    \begin{equation}
    \theta_{k+1}
    =
    \theta_k
    +
    \eta\left(\mu_k+\xi_k\right),
    \label{eq:noisy-opd-recursion}
    \end{equation}
    with learning rate $\eta>0$. Define the corresponding signal-only reference
    trajectory by
    \begin{equation}
    \bar{\theta}_{k+1}
    =
    \bar{\theta}_k
    +
    \eta\mu_k,
    \qquad
    \bar{\theta}_0=\theta_0.
    \label{eq:grounded-reference-trajectory}
    \end{equation}
    Under Assumption~\eqref{eq:nuisance-martingale-assumption}, the expected
    squared deviation from the grounded trajectory after $K$ steps is
    \begin{equation}
    \mathbb{E}
    \left[
    \left\|
    \theta_K-\bar{\theta}_K
    \right\|^2
    \right]
    =
    \eta^2
    \sum_{k=0}^{K-1}
    \mathbb{E}[v_k].
    \label{eq:parameter-drift-general}
    \end{equation}
    In particular, if $v_k=v$ for all $k$, then
    \begin{equation}
    \mathbb{E}
    \left[
    \left\|
    \theta_K-\bar{\theta}_K
    \right\|^2
    \right]
    =
    \eta^2Kv.
    \label{eq:parameter-drift-linear}
    \end{equation}
    Therefore, even zero-mean prior-driven OPD updates can induce parameter drift
    whose expected squared magnitude grows linearly with the number of optimization
    steps.
    \end{theorem}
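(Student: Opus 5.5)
The plan is to unroll both recursions and compare them directly. The deviation process $D_k=\theta_k-\bar\theta_k$ will reduce to a martingale, and the expected squared norm of a martingale is the sum of the expected squared increments.

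\textbf{Step 1: the deviation is a scaled sum of the noise.} Subtracting Eq.~\eqref{eq:grounded-reference-trajectory} from Eq.~\eqref{eq:noisy-opd-recursion} gives $D_{k+1}=D_k+\eta\xi_k$ with $D_0=0$, so
\begin{equation*}
D_K=\eta\sum_{k=0}^{K-1}\xi_k .
\end{equation*}
This step relies on both trajectories using the same sequence $\mu_k$. That is, $\mu_k$ is read as the realized signal along the actual history $\mathscr{H}_k$ and is simply reused in the reference recursion, so it cancels exactly. I will state this convention explicitly, since it is what makes the recursion linear in $\xi_k$.

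\textbf{Step 2: expand the square and kill the cross terms.} Expanding gives
\begin{equation*}
\mathbb{E}\|D_K\|^2=\eta^2\Big(\sum_k\mathbb{E}\|\xi_k\|^2+2\sum_{j<k}\mathbb{E}\langle\xi_j,\xi_k\rangle\Big).
\end{equation*}
For $j<k$, $\xi_j$ is determined by the minibatch at step $j$, which is part of $\mathscr{H}_k$. So $\xi_j$ is $\mathscr{H}_k$-measurable, and the tower property together with Eq.~\eqref{eq:nuisance-martingale-assumption} gives
\begin{equation*}
\mathbb{E}\langle\xi_j,\xi_k\rangle=\mathbb{E}\langle\xi_j,\mathbb{E}[\xi_k\mid\mathscr{H}_k]\rangle=0 .
\end{equation*}
The main obstacle is justifying this measurability, i.e., that $\mathscr{H}_k$ is a filtration containing all earlier minibatches. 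I will adopt this as the definition of the optimization history. Integrability of the cross terms follows from finite $v_k$ by Cauchy--Schwarz.

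\textbf{Step 3: identify the diagonal terms and specialize.} Again by the tower property, $\mathbb{E}\|\xi_k\|^2=\mathbb{E}\big[\mathbb{E}[\|\xi_k\|^2\mid\mathscr{H}_k]\big]=\mathbb{E}[v_k]$. This yields Eq.~\eqref{eq:parameter-drift-general}. Setting $v_k\equiv v$ then gives $\eta^2Kv$, which is Eq.~\eqref{eq:parameter-drift-linear}, and the linear-growth remark follows immediately.
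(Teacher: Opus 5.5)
Your proposal is correct and follows the paper's proof essentially step for step. Like the paper, you unroll the difference to $\theta_K-\bar\theta_K=\eta\sum_{k=0}^{K-1}\xi_k$, remove the cross terms using measurability of earlier $\xi_j$ with respect to $\mathscr{H}_k$ together with the tower property, and identify each diagonal term with $\mathbb{E}[v_k]$. One small precision: your Cauchy--Schwarz integrability remark needs $\mathbb{E}[v_k]<\infty$, not just the almost-sure finiteness $v_k<\infty$ stated in the assumption, and the paper's proof tacitly assumes the same.
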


    \begin{proof}
    Subtracting Equation~\eqref{eq:grounded-reference-trajectory} from
    Equation~\eqref{eq:noisy-opd-recursion} gives
    \begin{equation}
    \theta_{k+1}-\bar{\theta}_{k+1}
    =
    \theta_k-\bar{\theta}_k
    +
    \eta\xi_k.
    \end{equation}
    Since $\theta_0=\bar{\theta}_0$, unrolling the recursion yields
    \begin{equation}
    \theta_K-\bar{\theta}_K
    =
    \eta
    \sum_{k=0}^{K-1}
    \xi_k.
    \label{eq:unrolled-parameter-drift}
    \end{equation}
    Hence,
    \begin{align}
    \left\|
    \theta_K-\bar{\theta}_K
    \right\|^2
    &=
    \eta^2
    \left\|
    \sum_{k=0}^{K-1}
    \xi_k
    \right\|^2
    \nonumber\\
    &=
    \eta^2
    \sum_{k=0}^{K-1}
    \|\xi_k\|^2
    +
    2\eta^2
    \sum_{0\leq i<j\leq K-1}
    \left\langle
    \xi_i,\xi_j
    \right\rangle .
    \label{eq:drift-norm-expansion}
    \end{align}

    For any $i<j$, $\xi_i$ is measurable with respect to $\mathscr{H}_j$. By the
    tower property and Assumption~\eqref{eq:nuisance-martingale-assumption},
    \begin{align}
    \mathbb{E}
    \left[
    \left\langle
    \xi_i,\xi_j
    \right\rangle
    \right]
    &=
    \mathbb{E}
    \left[
    \mathbb{E}
    \left[
    \left\langle
    \xi_i,\xi_j
    \right\rangle
    \mid\mathscr{H}_j
    \right]
    \right]
    \nonumber\\
    &=
    \mathbb{E}
    \left[
    \left\langle
    \xi_i,
    \mathbb{E}
    \left[
    \xi_j\mid\mathscr{H}_j
    \right]
    \right\rangle
    \right]
    =
    0.
    \label{eq:cross-term-zero}
    \end{align}
    Taking expectation in Equation~\eqref{eq:drift-norm-expansion}, all cross terms
    vanish:
    \begin{align}
    \mathbb{E}
    \left[
    \left\|
    \theta_K-\bar{\theta}_K
    \right\|^2
    \right]
    &=
    \eta^2
    \sum_{k=0}^{K-1}
    \mathbb{E}
    \left[
    \|\xi_k\|^2
    \right]
    \nonumber\\
    &=
    \eta^2
    \sum_{k=0}^{K-1}
    \mathbb{E}
    \left[
    \mathbb{E}
    \left[
    \|\xi_k\|^2
    \mid\mathscr{H}_k
    \right]
    \right]
    \nonumber\\
    &=
    \eta^2
    \sum_{k=0}^{K-1}
    \mathbb{E}[v_k].
    \end{align}
    When $v_k=v$ for every $k$, this reduces to $\eta^2Kv$.

    \end{proof}

  \section{Experimental Details}
  \label{app:experimental-details}

  \subsection{Training Details}
  \label{app:training-details}
\paragraph{Overall Setup.}  Our implementation is built upon the verl framework. All experiments are conducted with PyTorch 2.10, CUDA 12.9, and Python 3.12. Training is performed on 8 NVIDIA H20 GPUs.

  \paragraph{Training configuration.}
  All the hyperparameters for the training are detailed in Table~\ref{tab:app-training-config}.
\begin{table*}[t]
      \centering
      \small
      \caption{Summary of training configurations for the three SA-OPD tasks.}
      \label{tab:app-training-config}
      \setlength{\tabcolsep}{3pt}
      \renewcommand{\arraystretch}{1.10}
      \begin{tabular}{p{0.23\linewidth}p{0.23\linewidth}p{0.23\linewidth}p{0.23\linewidth}}
      \toprule
      \textbf{Item}
      & \textbf{Visual Understanding}
      & \textbf{Visual Reasoning}
      & \textbf{Mathematical Reasoning} \\
      \midrule
      Main teacher
      & Qwen3.5-35B-A3B
      & Qwen3.5-35B-A3B
      & Qwen3-4B-Instruct \\
      Main student
      & Qwen3.5-2B
      & Qwen3.5-2B
      & Qwen3-1.7B \\
      Scalability teacher
      & Qwen3.5-9B
      & Qwen3.5-9B
      &  DeepSeek-R1-0528-Qwen3-8B \\
      Max prompt length  & 12000
      & 12000
      & 2048 \\
      Max response length
      & 4096
      & 4096
      & 8192 \\
      Batch size
      & 128
      & 128
      & 128 \\
      Learning rate
      & 1e-6
      & 1e-6
      & 1e-6 \\
      Optimizer
      & AdamW
      & AdamW
      & AdamW \\
      Training steps
      & 700
      & 270
      & 160 \\
      FLMR bound $\beta$
      & 1.8
      & -
      & - \\
      Initial $p_1,p_2$
      & 0.2, 0.3
      &  0.2, 0.3
      &  0.2, 0.3  \\
      \bottomrule
      \end{tabular}
    \end{table*}

   \subsection{Evaluation Details}
  \label{app:evaluation-details}

  For VLM evaluation, we use six benchmarks covering visual understanding and
  visual reasoning. Visual understanding includes EvoChart, MMIFEval, and CountQA.
  Visual reasoning includes MathVision, Geo3K, and MathVista$_{\text{mini}}$. For simplicity, we refer to MathVista${\text{mini}}$ as MathVista throughout the paper. For LLM math reasoning, we evaluate on Math500, AMC 2023, AIME 2024, AIME 2025,
  and MinervaMATH. We follow the standard exact-match or official answer
  extraction protocol for each benchmark. All benchmarks are
  evaluated using their official metrics and evaluation scripts whenever
  available.

All evaluations were conducted in a zero-shot setting. For LLM-based math
  reasoning tasks, we set the maximum number of newly generated tokens to 18,000
  and used temperature $=1.0$ with top-$p=0.95$. For VLM-based tasks, we used the
  same sampling parameters, i.e., temperature $=1.0$ and top-$p=0.95$, but adopted
  different generation lengths according to the task type: the maximum number of
  new tokens was set to 4,096 for visual reasoning tasks and 1,024 for visual
  understanding tasks.
  \begin{table}[t]
  \centering
  \caption{Evaluation benchmark summary.}
  \label{tab:app-eval-summary}
  \setlength{\tabcolsep}{6pt}
  \renewcommand{\arraystretch}{1.08}
  \begin{tabular}{lll}
  \toprule
  \textbf{Domain} & \textbf{Benchmark} & \textbf{Metric} \\
  \midrule
  Visual understanding & EvoChart & Accuracy score \\
  Visual understanding & MMIFEval & Accuracy \\
  Visual understanding & CountQA & Exact Match \\
  Visual reasoning & MathVision & Accuracy \\
  Visual reasoning & Geo3K & Accuracy \\
  Visual reasoning & MathVista & Accuracy \\
  Math reasoning & Math500 & Accuracy \\
  Math reasoning & AMC 2023 &  Avg@8 \\
  Math reasoning & AIME 2024 & Avg@8 \\
  Math reasoning & AIME 2025 & Avg@8 \\
  Math reasoning & MinervaMATH & Avg@8 \\
  \bottomrule
  \end{tabular}
  \end{table}

  \section{Detailed Algorithm}
  \label{app:detailed-algorithm}

  \paragraph{SA-OPD.} Algorithm~\ref{alg:sa-opd} summarizes the SA-OPD training procedure. The method
  differs from Vanilla OPD only in the computation of the input-grounding gap and
  the resulting token-level filtering mask.

  \begin{algorithm}[t]
  \caption{SA-OPD Training Step}
  \label{alg:sa-opd}
  \begin{algorithmic}[1]
  \Require Student policy $\pi_\theta$, teacher policy $\pi_{\rm T}$, prompt batch
  $\mathcal{B}$, initial ratios $p_1,p_2$, FLMR bound $\beta$
  \Ensure Updated student policy $\pi_\theta$
  \For{each prompt $x\in\mathcal{B}$}
      \State Sample student rollout $y=(y_1,\ldots,y_L)\sim\pi_\theta(\cdot\mid x)$
      \For{$t=1,\ldots,L$}
          \State Compute full-prompt coefficient
          \[
          A_t^{\rm full}
          =
          \log\pi_\theta(y_t\mid x,y_{<t})
          -
          \log\pi_{\rm T}(y_t\mid x,y_{<t})
          \]
          \State Compute residual no-prompt coefficient
          \[
          A_t^{\rm res}
          =
          \log\pi_\theta(y_t\mid \varnothing,y_{<t})
          -
          \log\pi_{\rm T}(y_t\mid \varnothing,y_{<t})
          \]
          \State Compute input-grounding gap
          \[
          \Delta_t^{\rm IG}=|A_t^{\rm full}-A_t^{\rm res}|
          \]
      \EndFor
  \EndFor
  \State Construct candidate set
  \[
  \mathcal{F}(p_1,p_2)
  =
  \operatorname{Bottom}_{p_1}(\Delta_t^{\rm IG})
  \cap
  \operatorname{Top}_{p_2}(|A_t^{\rm full}|)
  \]
  \State Let $\mathcal{F}$ be the final filtered token set
  \State Optimize OPD loss on retained tokens $\mathcal{V}\setminus\mathcal{F}$
  \State Update $\theta$ with the resulting gradient
  \end{algorithmic}
  \end{algorithm}

  \paragraph{Dynamic FLMR Constraint.}
  \label{app:dynamic-flmr}

  A fixed pair of selection ratios $(p_1,p_2)$ can remove very different amounts
  of loss mass across tasks and training stages. This is especially problematic
  for visual understanding, where the filtered tokens can occupy a large fraction
  of the OPD objective. We therefore constrain the filtered loss-mass ratio:
  \begin{equation}
  \mathrm{FLMR}(\mathcal{F})
  =
  \frac{
  \sum_{t\in\mathcal{F}}
  |A_t^{\rm full}|
  }{
  \sum_{t\in\mathcal{V}}
  |A_t^{\rm full}|+\epsilon
  }
  \leq \beta.
  \end{equation}

  Algorithm~\ref{alg:adaptive-filter} summarizes the adaptive FLMR-constrained filtering procedure.
  \begin{algorithm}[t]
  \caption{Adaptive FLMR-Constrained Filtering}
  \label{alg:adaptive-filter}
  \begin{algorithmic}[1]
  \Require Valid tokens $\mathcal{V}$, full-prompt coefficients $A_t^{\rm full}$,
  input-grounding gaps $\Delta_t^{\rm IG}$, initial ratios $p_1^0,p_2^0$,
  target interval $[\beta_{\min},\beta_{\max}]$
  \Ensure Filtered token set $\mathcal{F}$

  \State Define
  \[
  M=\sum_{t\in\mathcal{V}}|A_t^{\rm full}|+\epsilon .
  \]

  \Function{BuildFilter}{$\alpha$}
      \State $p_1\leftarrow \mathrm{clip}(\alpha p_1^0,0,1)$,
      $p_2\leftarrow \mathrm{clip}(\alpha p_2^0,0,1)$
      \State $\mathcal{L}\leftarrow
      \operatorname{Bottom}_{p_1}(\Delta_t^{\rm IG})$
      \State $\mathcal{E}\leftarrow
      \operatorname{Top}_{p_2}(A_t^{\rm full})
      \cup
      \operatorname{Bottom}_{p_2}(A_t^{\rm full})$
      \State $\mathcal{F}_{\alpha}\leftarrow \mathcal{L}\cap\mathcal{E}$
      \State $r(\alpha)\leftarrow
      \sum_{t\in\mathcal{F}_{\alpha}}|A_t^{\rm full}|/M$
      \State \Return $(\mathcal{F}_{\alpha},r(\alpha))$
  \EndFunction

  \State $(\mathcal{F},r)\leftarrow \Call{BuildFilter}{1}$

  \If{$r>\beta_{\max}$}
      \State Use binary search over $\alpha\in[0,1]$ to find the largest
      $\alpha$ such that $r(\alpha)\leq\beta_{\max}$.
      \State $\mathcal{F}\leftarrow\mathcal{F}_{\alpha}$
  \ElsIf{$r<\beta_{\min}$}
      \State Let $\alpha_{\max}=\max(1,1/p_1^0,1/p_2^0)$.
      \State Use binary search over $\alpha\in[1,\alpha_{\max}]$ to find the
      smallest $\alpha$ such that $r(\alpha)\geq\beta_{\min}$.
      \If{the resulting $r(\alpha)>\beta_{\max}$}
          \State Use binary search over $\alpha\in[0,\alpha_{\max}]$ to find the
          largest $\alpha$ such that $r(\alpha)\leq\beta_{\max}$.
      \EndIf
      \State $\mathcal{F}\leftarrow\mathcal{F}_{\alpha}$
  \EndIf

  \State \Return $\mathcal{F}$
  \end{algorithmic}
  \end{algorithm}
   \section{Computational Overhead}
   \label{app:computational-overhead}

  We analyze the additional computational cost introduced by SA-OPD compared with Vanilla OPD.
  SA-OPD does not introduce any new trainable parameters or auxiliary networks. Its additional cost comes from
  estimating the input-groundedness of token-level teacher--student divergence, which requires computing the same
  student-generated response under a residual no-prompt context.

  \paragraph{Cost Decomposition.}
  For a batch of on-policy responses with $N$ valid response tokens, Vanilla OPD consists of three main components:
  (i) student on-policy rollout under the original prompt,
  (ii) teacher and student log-probability evaluation under the original prompt, and
  (iii) the backward update on the OPD loss.
  Let these costs be denoted by
  \begin{equation}
  C_{\rm OPD}
  =
  C_{\rm roll}^{\rm S}(x)
  +
  C_{\rm eval}^{\rm S}(x,y)
  +
  C_{\rm eval}^{\rm T}(x,y)
  +
  C_{\rm bwd},
  \label{eq:opd-cost}
  \end{equation}
  where $C_{\rm roll}^{\rm S}(x)$ is the autoregressive student rollout cost,
  $C_{\rm eval}^{\rm S}(x,y)$ and $C_{\rm eval}^{\rm T}(x,y)$ are the student and teacher log-probability evaluation
  costs on the sampled response, and $C_{\rm bwd}$ is the backward and optimizer-update cost. SA-OPD reuses the same on-policy response $y$ generated by the student. To estimate the input-grounding gap $
  \Delta_t^{\rm IG}$, it additionally computes the teacher--student divergence under a residual no-prompt context:
  \begin{equation}
  A_t^{\rm res}
  =
  \log\pi_\theta(y_t\mid \varnothing,y_{<t})
  -
  \log\pi_{\rm T}(y_t\mid \varnothing,y_{<t}).
  \end{equation}
  This requires one additional student pass and one additional teacher pass under the residual context. Therefore, the
  total cost of SA-OPD can be written as
  \begin{equation}
  C_{\rm SA}
  =
  C_{\rm OPD}
  +
  C_{\rm eval}^{\rm S}(\varnothing,y)
  +
  C_{\rm eval}^{\rm T}(\varnothing,y)
,
  \label{eq:sa-cost}
  \end{equation}
 \paragraph{Empirical Analysis.}
  We further quantify the practical training overhead of SA-OPD against Vanilla
  OPD under the same experimental settings used in the main results. As shown in
  Table~\ref{tab:training_efficiency}, SA-OPD introduces only modest additional
  cost across tasks, with the measured overhead ranges from 2.64\% to 7.53\%.

  This overhead mainly comes from the residual no-prompt divergence computation
  used for spurious-signal detection. Specifically, SA-OPD performs additional
  teacher and student scoring under the residual no-prompt context to estimate
  token-level input-groundedness. These computations are used only to construct
  the filtering mask and do not introduce extra trainable parameters or additional
  backward passes. Overall, SA-OPD provides a controlled efficiency--reliability
  trade-off: a small amount of extra scoring computation in exchange for more reliable dense OPD supervision.

   \begin{table}[t]
      \centering
      \setlength{\tabcolsep}{6pt}
      \renewcommand{\arraystretch}{1.15}
      \begin{tabular}{llcc}
      \toprule
      \textbf{Task / Hardware}
      & \textbf{Method}
      & \makecell{\textbf{Total Training Time}\\\textbf{(hours)}}
      & \textbf{Overhead} \\
      \midrule
      \multirow{2}{*}{
      \makecell{\textbf{Math Reasoning}\\
      \textbf{(8$\times$H20)}
      }}
      & OPD
      & 5.19
      & -- \\
      & \textbf{SA-OPD}
      & \textbf{5.54}
      & \textbf{+6.74\%} \\
      \midrule
      \multirow{2}{*}{
      \makecell{\textbf{Visual Understanding}\\
      \textbf{(8$\times$H20)}
      }}
      & OPD
      & 4.38
      & -- \\
      & \textbf{SA-OPD}
      & \textbf{4.71}
      & \textbf{+7.53\%} \\
      \midrule
      \multirow{2}{*}{
      \makecell{\textbf{Visual Reasoning}\\
      \textbf{(8$\times$H20)}
      }}
      & OPD
      & 1.89
      & -- \\
      & \textbf{SA-OPD}
      & \textbf{1.94}
      & \textbf{+2.64\%} \\
      \bottomrule
      \end{tabular}
      \caption{
      Training efficiency comparison between Vanilla OPD and SA-OPD across
      different tasks. SA-OPD incurs only modest overhead from residual no-prompt
      divergence computation and does not introduce additional trainable
      parameters or extra backward passes.
      }
      \label{tab:training_efficiency}
  \end{table}

 \newpage
  \section{Filtered Token Example}
  \label{app:case-studies}

  We present qualitative visualizations of SA-OPD filtered tokens in this section. For each example, the generated response is decomposed into tokens, and each token is annotated with two values. \textbf{F} represents $A_t^{\rm full}$, the token-level teacher--student divergence under the original input context, while $\Delta$ represents the Input-Grounding Gap, which quantifies the sensitivity of this divergence to the task input. Tokens outlined in red are selected by SA-OPD for filtering. These tokens have large optimization impact but small input-grounding gap, indicating that their teacher supervision is more likely driven by language priors or template preferences rather than input-specific evidence.

 \begin{figure*}[t]
        \centering
        \includegraphics[width=1.0\textwidth]{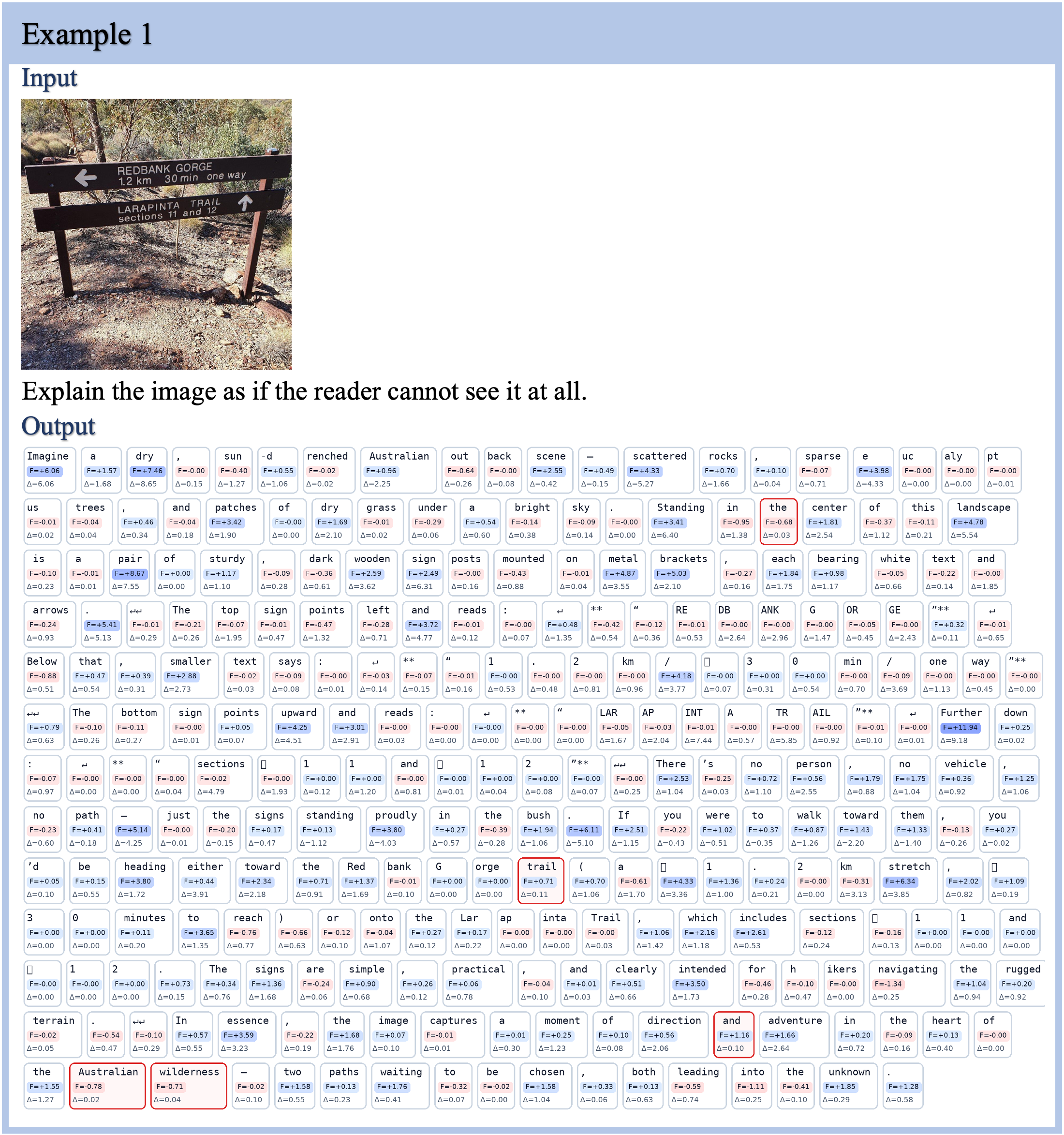}

        \label{fig:e1}
    \end{figure*}
     \begin{figure*}[t]
        \centering
        \includegraphics[width=1.0\textwidth]{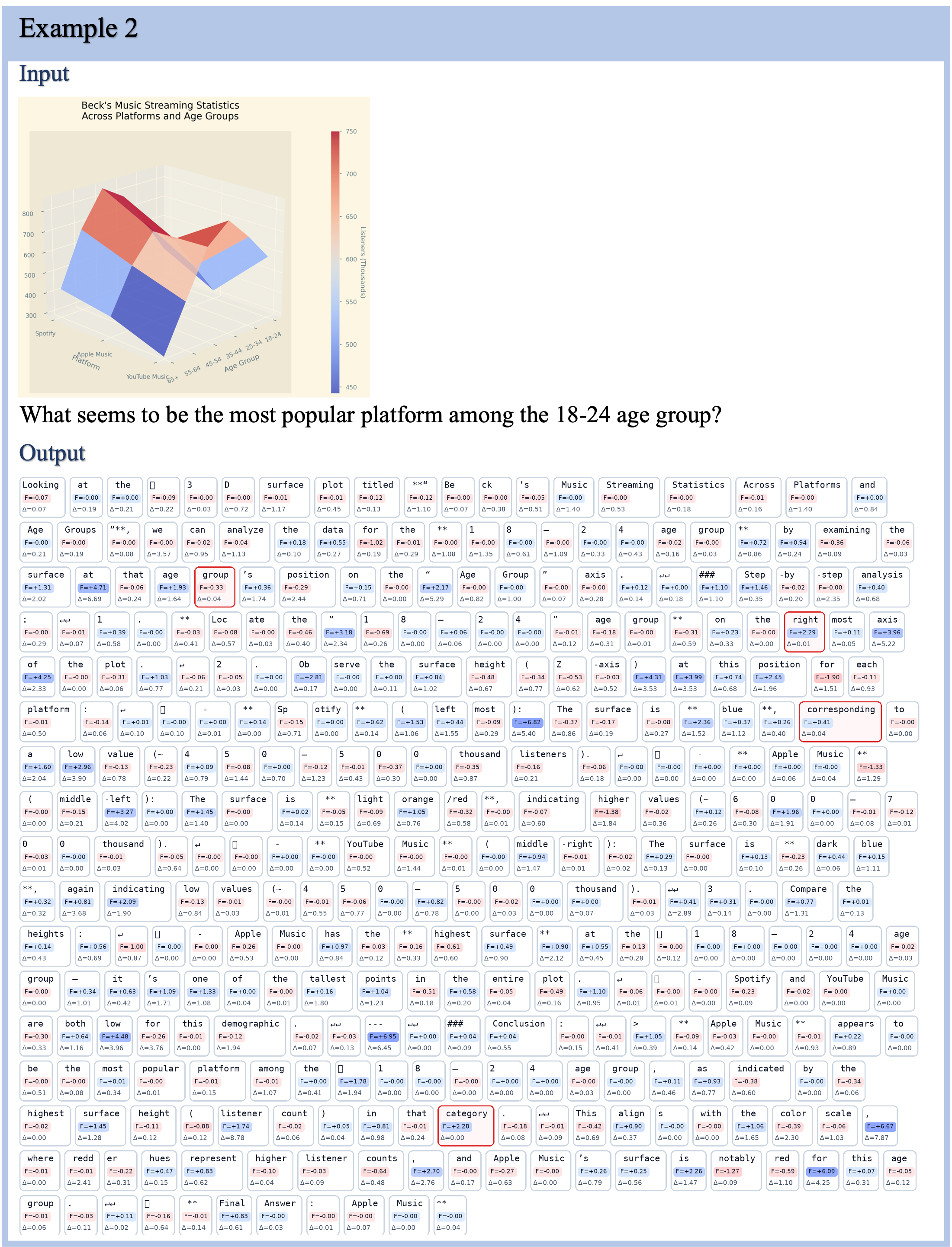}

        \label{fig:e2}
    \end{figure*}
      \begin{figure*}[t]
        \centering
        \includegraphics[width=1.0\textwidth]{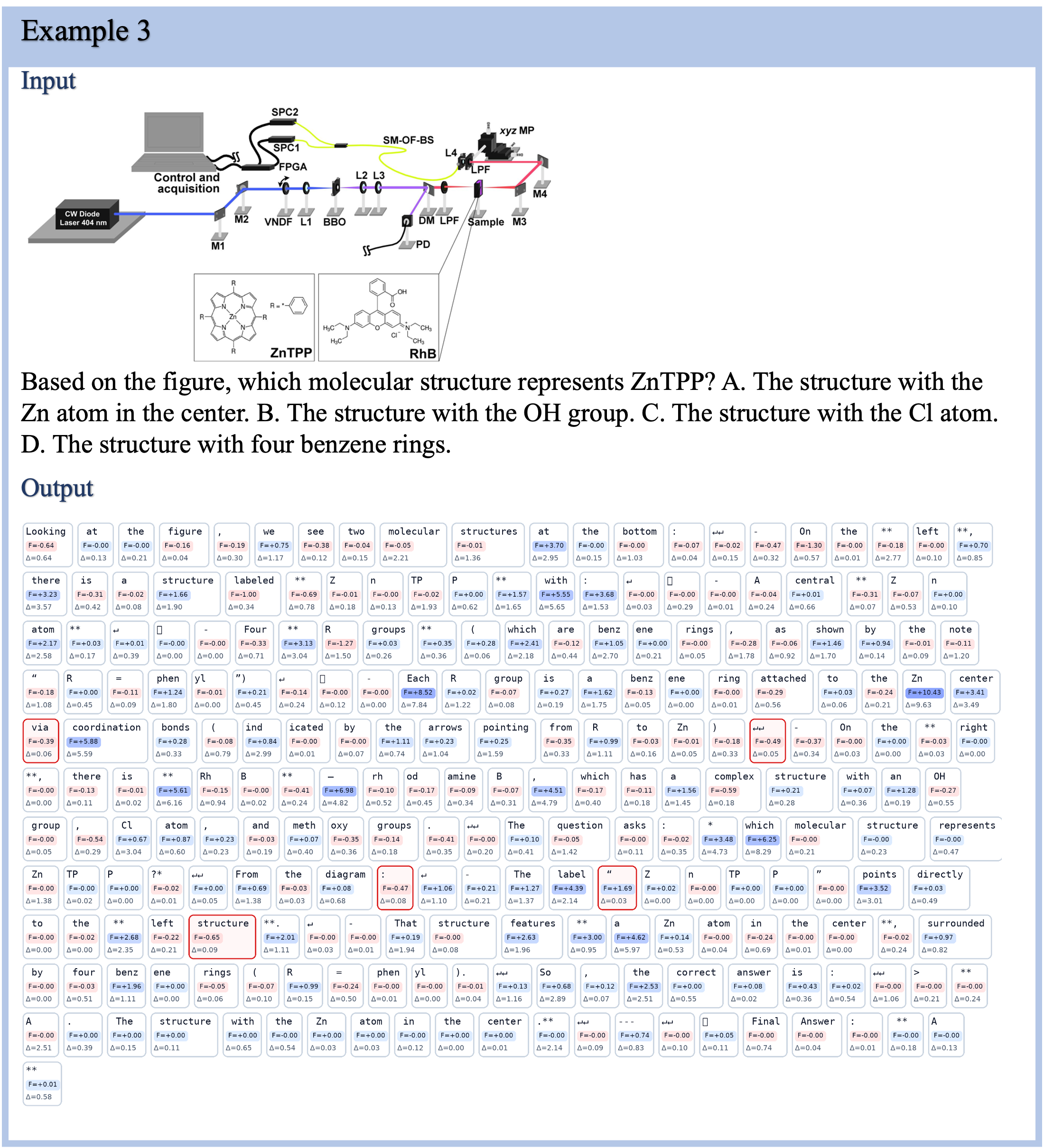}

        \label{fig:e3}
    \end{figure*}
      \begin{figure*}[t]
        \centering
        \includegraphics[width=1.0\textwidth]{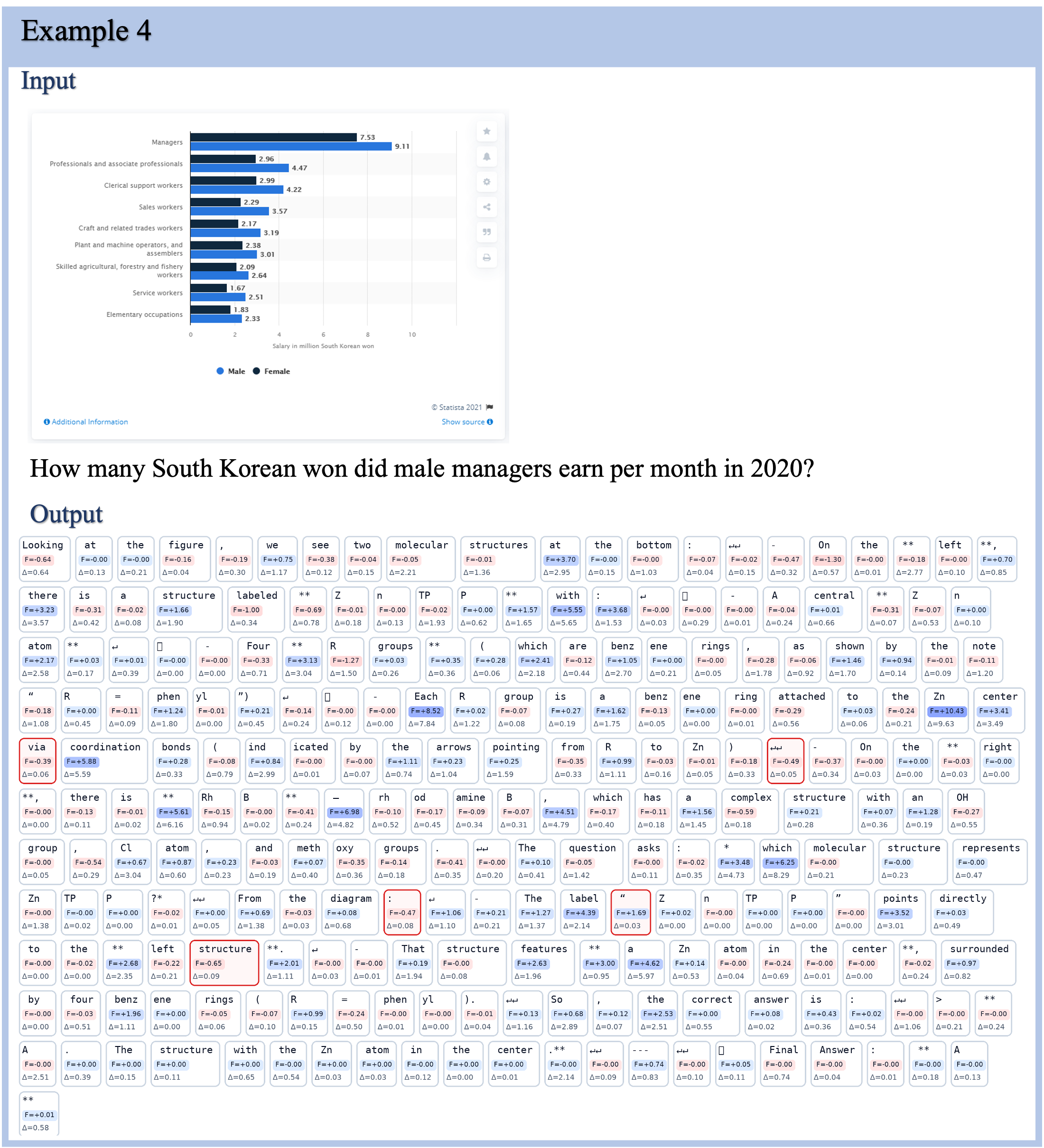}

        \label{fig:e4}
    \end{figure*}
      \begin{figure*}[t]
        \centering
        \includegraphics[width=0.85\textwidth]{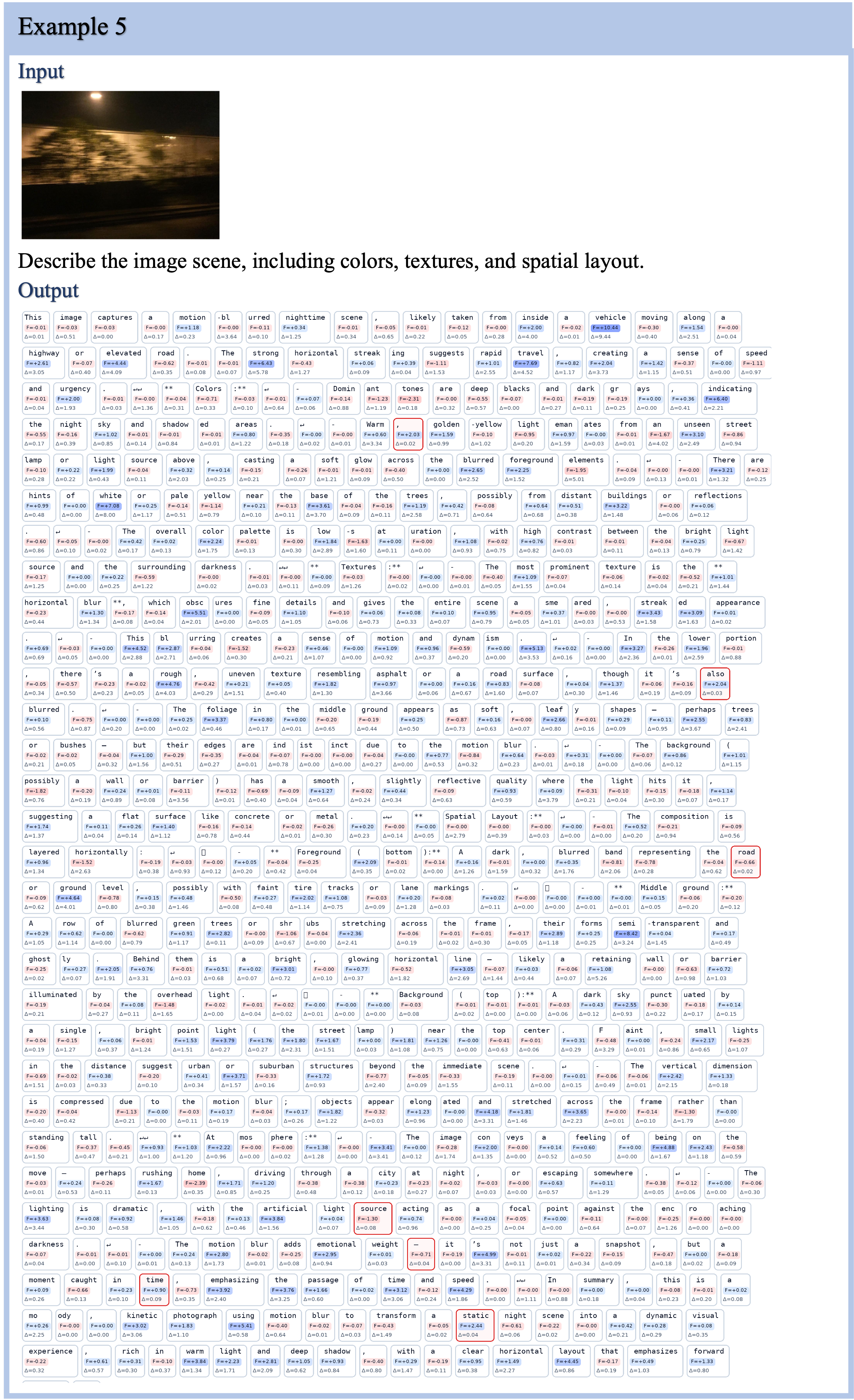}

        \label{fig:e5}
    \end{figure*}
\end{document}